\documentclass{article}

\PassOptionsToPackage{numbers, compress}{natbib}

\usepackage[main, preprint]{neurips_2026}

\usepackage[utf8]{inputenc} 
\usepackage[T1]{fontenc}    
\usepackage{hyperref}       
\usepackage{url}            
\usepackage{booktabs}       
\usepackage{amsfonts}       
\usepackage{nicefrac}       
\usepackage{microtype}      
\usepackage{xcolor}         
\usepackage{array}
\usepackage[normalem]{ulem}
\usepackage{amsmath,amssymb,amsthm,mathtools,bm}

\RequirePackage{algorithm}
\RequirePackage{algorithmic}

\newtheorem{theorem}{Theorem}
\newtheorem{proposition}{Proposition}
\newtheorem{corollary}{Corollary}

\newcommand{\R}{\mathbb{R}}
\newcommand{\Mcal}{\mathcal{M}}
\newcommand{\T}{\mathsf{T}}
\newcommand{\ip}[2]{\left\langle #1, #2 \right\rangle}
\newcommand{\norm}[1]{\left\lVert #1 \right\rVert}

\newcommand{\rank}{\operatorname{rank}}
\newcommand{\msign}{\operatorname{msign}}
\newcommand{\sym}{\operatorname{sym}}

\newcommand{\argmin}{\operatorname*{arg\,min}}
\newcommand{\argmax}{\operatorname*{arg\,max}}
\newcommand{\GL}{\operatorname{GL}}
\newcommand{\secref}[1]{\hyperref[#1]{Section~\ref*{#1}}}
\newcommand{\appendixref}[1]{\hyperref[#1]{Appendix~\ref*{#1}}}

\newcommand{\aireviewer}[1]{}

\title{LoRA-Muon: Spectral Steepest Descent on the Low-Rank Manifold}
\workshoptitle{LoRA-Muon}

\author{%
  Franz Louis Cesista \\
  Ateneo de Manila University; EleutherAI \\
  \texttt{franzlouiscesista@gmail.com} \\
  \And
  Katherine Crowson \\
  EleutherAI \\
  \texttt{crowsonkb@gmail.com} \\
  \And
  Cédric Simal \\
  NaXys, UNamur; EleutherAI \\
  \texttt{cedric.simal@unamur.be}
  \And
  Stella Biderman \\
  EleutherAI \\
  \texttt{stella@eleuther.ai}
}

\begin{document}

\graphicspath{{figures/}}

\maketitle

\begin{abstract}
Low-Rank Adaptation (LoRA) significantly reduces compute and memory costs for finetuning Deep Learning models but is often harder to tune than dense training: when using factor-wise optimizers such as AdamW, it is sensitive to initialization choices, its optimal learning rates transfer poorly across ranks, and it often fails to beat dense baselines. We derive LoRA-Muon by applying the Muon optimizer's spectral steepest-descent rule to the low-rank setting. Along with our split weight-decay rule, our main claim is that LoRA-Muon is a good low-rank proxy for full-rank Muon and Shampoo-family optimizers. Its optimal learning rates transfer across rank, width, depth, and factor-rescaling.
In our compute-matched TinyShakespeare study, a rank-$2$ proxy recovers the dense best tested learning rate, and a rank-$32$ LoRA-Muon run attains lower mean validation loss than the dense baseline in the seed-averaged sweep. We further show that the Spectron optimizer depends on arbitrary factor scaling, so it would likely be a poor fit when finetuning starts from badly imbalanced factors, and that LoRA-RITE's simplified QR-coordinate core implements the same spectral update. LoRA-Muon computes that update without QR-decomposition and avoids storing second moments, making it more accelerator-friendly and memory-efficient.
\end{abstract}

\section{Introduction}\label{sec:intro}

Low-Rank Adaptation (LoRA) is widely used in production settings in the industry because it cuts both training FLOPs and accelerator memory requirements \cite{hu2022lora}.
But standard optimizers such as AdamW \cite{kingma2015adam,loshchilov2019adamw}, when applied independently to the LoRA factors, make LoRA harder to tune than dense training.
And once the factor scales drift apart, training dynamics often diverge even when starting from the exact same $W = AB^T$ composed weights.
Recent work also show that hyperparameters such as the learning rate depend strongly on rank, initialization, and scaling conventions \cite{chen2026mua}, and native low-rank pretraining often needs extra stabilization to compete with dense baselines \cite{janson2026spectron}.
We therefore want an optimizer whose optimal learning rates transfer across rank, width, depth, and factor scales while maintaining strong performance even at extremely low rank.

The crux of \emph{Old Optimizer, New Norm} is that optimizers become clearer once we ask a simpler question: \emph{what norm is this optimizer really doing steepest descent in?} \cite{bernstein2024oldnorm}.
Under that lens, the Muon optimizer performs steepest descent under the spectral norm in $\mathcal{M} = \mathbb{R}^{m \times n}$ \cite{jordan2024muon,bernstein2025derivingmuon,liu2025muonscalable}. That viewpoint turns optimizer design into a (non-Euclidean) geometric problem.

In this paper, we derive a novel family of LoRA optimizers by directly solving the steepest descent problem in the low-rank manifold, $\mathcal{M}_{r} = \{ W \in \mathbb{R}^{m \times n} : \text{rank}(W) = r \}$, instead. Specializing to the spectral norm then yields LoRA-Muon whose optimal learning rates transfer across rank, depth, width, and factor-rescaling, making it a good low-rank proxy of the Muon optimizer (and the Shampoo-family of optimizers as they produce updates of matching spectral norm as Muon \cite{gupta2018shampoo, anil2021scalablesecondorderoptimization, cesista2025casprmuon}). And with modern hyperparameter scaling laws, knowledge of the optimal learning rate is often sufficient
to derive optimal values for other hyperparameters such as weight decay, batch size, training horizon,
and momentum as we scale other parameters \cite{kosson2026weightdecay,shulgin2026hyperscaling,islamov2026batchsize}.

In our compute-matched TinyShakespeare experiments, a rank-$2$ LoRA-Muon proxy already recovers the best tested learning rate for dense Muon. The same best tested learning rate indeed transfers across the rank, width, depth, and factor-rescaling sweeps. At rank $32$, LoRA-Muon attains lower mean validation loss than the dense baseline in the seed-averaged TinyShakespeare sweep. We also show that Spectron is sensitive to arbitrary factor scaling, while LoRA-RITE's simplified QR-coordinate core turns out to be the same spectral update written in QR coordinates.

The rest of this paper is structured as follows. \secref{sec:preliminaries} collects the preliminaries: steepest descent under a norm, the Muon
optimizer, and the low-rank geometry of LoRA. \secref{sec:derive-lora-muon} derives LoRA-Muon.
\secref{sec:gauge} formalizes the symmetry result. \secref{sec:spectron-rite} compares against
Spectron and LoRA-RITE. \secref{sec:experiments} gives the empirical study, and
\secref{sec:limitations} closes with the main limitations.

\subsection{Our contributions}\label{subsec:contributions}
Our contributions are:
\begin{enumerate}
  \item We derive a family of Linear Minimization Oracles (LMOs) for steepest descent in the low-rank manifold under unitary-invariant norms. Specializing to the spectral norm yields LoRA-Muon, a low-rank proxy for the Muon optimizer.
  \item We derive a split weight-decay rule that ensures weight norms and step sizes in the full-rank and low-rank settings match.
  \item We prove that the LoRA-Muon update on $W$ is gauge-invariant under the full
  action $(A, B) \mapsto (AR, BR^{-\T})$, for arbitrary invertible $R$.
  \item We prove that Spectron is not gauge-invariant, already failing under scalar gauge rescaling.
  \item We show that LoRA-RITE's simplified spectral QR-coordinate core is algebraically identical to LoRA-Muon, so both arise from the same steepest-descent construction. LoRA-Muon, however, computes the update without QR factorizations or second moments, thereby greatly improving accelerator-friendliness and memory-efficiency.
  \item We empirically show in Large Language Model training sweeps that LoRA-Muon is a useful proxy for dense Muon learning-rate search: the dense best tested learning rate transfers across rank, width, depth, and factor rescaling, already at rank $2$, and (compute-matched) rank-$32$ LoRA-Muon attains lower mean validation loss than the dense baseline in the seed-averaged sweep.
\end{enumerate}

\begin{figure}
    \centering
    \includegraphics[width=0.7\linewidth]{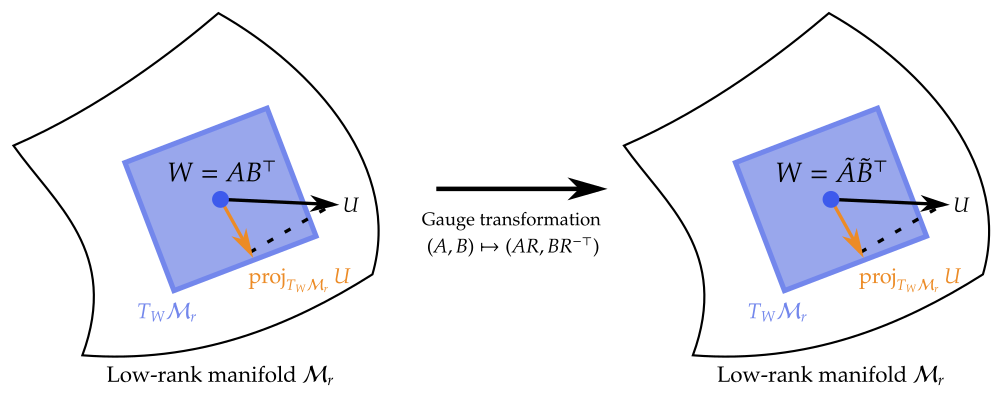}
    \caption{Our LoRA-Muon update implicitly applies the Muon update of the product $W=AB^T$ and projects it onto the tangent space of the low rank manifold. As a result, our update is invariant under gauge transformations of the LoRA factors.}
    \label{fig:schematic}
\end{figure}




\section{Preliminaries}\label{sec:preliminaries}
This section fixes the ambient steepest-descent setup, the Muon specialization, and the low-rank
geometry used in the derivation.

\subsection{Steepest descent under a norm}\label{subsec:sd-norm}
Let $f : \R^{m \times n} \to \R$ be differentiable and bounded below, let $W_t \in \R^{m \times n}$ be the
weight matrix at step $t$, let $G_t = \nabla f(W_t)$, let $\eta > 0$, and let $\norm{\cdot}$ be any norm.
A first-order Taylor expansion gives,
\begin{equation}
  f(W_t + \Delta W_t) = f(W_t) + \ip{G_t}{\Delta W_t} + o(\norm{\Delta W_t}). \label{eq:sd-general}
\end{equation}

The goal is to find an update $\Delta W_t$ that minimizes the LHS. Replacing the higher-order remainder by a hard trust-region constraint then yields the constrained linearized subproblem \cite{boydConvexOptimization2004},
\begin{equation}
  \begin{aligned}
    \Delta W_t^\star
    &=
    \argmin_{\Delta W_t \in \R^{m \times n}}
    \ip{G_t}{\Delta W_t}
    \quad
    \text{s.t.}
    \quad
    \norm{\Delta W_t} \le \eta \\
    &=
    - \eta U_t^\star,
    \qquad
    U_t^\star \in \argmax_{\norm{U} \le 1} \ip{G_t}{U}.
  \end{aligned}
  \label{eq:trust-region}
\end{equation}
As emphasized in \cite{bernstein2024oldnorm}, the norm controls direction while the scalar $\eta$
controls radius. We take this constrained form as the ambient starting point, because Muon and
LoRA-Muon are most naturally described by their linear minimization oracles on a norm ball
\cite{pethick2025normconstrainedlmos}.

\subsection{Muon as steepest descent under the spectral norm}\label{subsec:ambient-muon}
Take now $W_t \in \R^{m \times n}$ and equip matrix space with the spectral norm
$\norm{\cdot}_{2 \to 2}$. Then the linear minimization oracle over the spectral norm ball is the matrix
sign function,
\begin{equation}
  \argmin_{\norm{\Delta W_t}_{2 \to 2} \le \eta} \ip{G_t}{\Delta W_t}
  =
  - \eta \, \msign(G_t).
\end{equation}
For a rectangular SVD $G_t = U\Sigma V^\T$, we use the standard convention
$\msign(G_t)=UV^\T$.
Muon therefore performs steepest descent under the spectral norm in ambient matrix space. Momentum,
layer-wise scalings, and numerical orthogonalization matter in practice, but they sit on top of this
geometric core.

\subsection{LoRA as a low-rank optimization problem}\label{sec:lora-low-rank-problem}

In the rank-$r$ setting, we can write weights (and weight updates) as $W = AB^\T$ with $A \in \R^{m \times r}$ and $B \in \R^{n \times r}$ \cite{hu2022lora}.
That factorization is not unique: many different pairs $(A,B)$ produce the same low-rank matrix $W$. A good low-rank optimizer should therefore depend on the induced update on $W$, not on arbitrary coordinates in factor space.

Fix a rank $r \ll \min(m, n)$ and define the space of rank $r$ matrices,
\begin{equation}
  \Mcal_r = \{W \in \R^{m \times n} : \rank(W) = r\}.
\end{equation}
This is a differential manifold of matrices \cite{boumal2023intromanifolds}, and at a point $W = AB^\T$ with full-column-rank factors
$A \in \R^{m \times r}$ and $B \in \R^{n \times r}$, the tangent space is
\begin{equation}
  T_W \Mcal_r
  =
  \{\Delta A \, B^\T + A \, \Delta B^\T :
  \Delta A \in \R^{m \times r}, \ \Delta B \in \R^{n \times r}\}.
  \label{eq:tangent}
\end{equation}
The key observation is that a LoRA analogue of Muon should start from the tangent space \eqref{eq:tangent} itself and solve the steepest descent problem there, rather than first optimizing $A$ and $B$ as if they were independent coordinates.


\section{Deriving LoRA-Muon}\label{sec:derive-lora-muon}
\subsection{The constrained low-rank subproblem}\label{subsec:constrained-subproblem}

Let $f$ denote the objective over the offset low-rank family $W_{\mathrm{pre}} + \Mcal_r$. $W_{\mathrm{pre}}$ is the 'pretrained' weights which we can set to $0$ when training from scratch.
At $W_t = A_t B_t^\T$ with gradient $G_t = \nabla_W f(W_{\mathrm{pre}} + W_t)$, let $\norm{\cdot}$ be any unitarily invariant matrix norm, and define the idealized low-rank steepest-descent problem,
\begin{equation}
  \Delta W_t^\star
  =
  \argmin_{\Delta W_t \in T_{W_t}\Mcal_r}
  \ip{G_t}{\Delta W_t}
  \quad
  \text{s.t.}
  \quad
  \norm{\Delta W_t} \le \eta.
  \label{eq:lora-main}
\end{equation}
Writing $\Delta W_t = \Delta A B^\T + A \Delta B^\T$ makes the constraint difficult because the two
terms are coupled. A tractable approximation is to split the shared trust-region budget evenly between
the two tangent components and solve the resulting half-radius subproblems independently:
\begin{align}
  \Delta A^\star
  &=
  \argmin_{\Delta A}
  \ip{G_t}{\Delta A B^\T}
  \quad
  \text{s.t.}
  \quad
  \norm{\Delta A B^\T} \le \frac{\eta}{2},
  \label{eq:lora-decoupled-a}
  \\
  \Delta B^\star
  &=
  \argmin_{\Delta B}
  \ip{G_t}{A \Delta B^\T}
  \quad
  \text{s.t.}
  \quad
  \norm{A \Delta B^\T} \le \frac{\eta}{2}.
  \label{eq:lora-decoupled-b}
\end{align}
We write the subproblem in terms of the ambient gradient $G_t = \nabla_W f$, but LoRA training
does not expose $G_t$ directly through backpropagation. Instead, backprop gives us only the factor
gradients $\nabla_A f = G_t B$ and $\nabla_B f = G_t^\T A$. We therefore rewrite the weight-space
problem in those factor-gradient coordinates.

By the triangle inequality, any pair of feasible solutions to
\eqref{eq:lora-decoupled-a}--\eqref{eq:lora-decoupled-b} satisfies,
\begin{equation}
  \norm{\Delta A^\star B^\T + A \Delta B^{\star\T}}
  \le
  \norm{\Delta A^\star B^\T} + \norm{A \Delta B^{\star\T}}
  \le
  \eta,
\end{equation}
so the decoupled heuristic still respects the original trust-region radius.
This equal split is not merely a bookkeeping convenience. In the spectral specialization we study
experimentally, the two tangent
components $\Delta A B^\T$ and $A \Delta B^\T$ rapidly become strongly aligned and remain so
throughout training; see \hyperref[fig:alignment]{Figure~\ref*{fig:alignment}}. Here alignment means
the triangle-tightness ratio,
\begin{equation}
  \text{LoRA factor alignment}
  = 
  \frac{\norm{\Delta A B^\T + A\Delta B^\T}_{2 \to 2}}
  {\norm{\Delta A B^\T}_{2 \to 2} + \norm{A\Delta B^\T}_{2 \to 2}},
\end{equation}
which equals $1$ when the two tangent components add without cancellation under the spectral norm.
Once those two terms align, the triangle inequality above is close to tight, so dividing the
trust-region budget into two half-radius subproblems is a good approximation rather than an arbitrary
one.


\begin{figure}[t]
  \centering
  \includegraphics[width=0.8\linewidth]{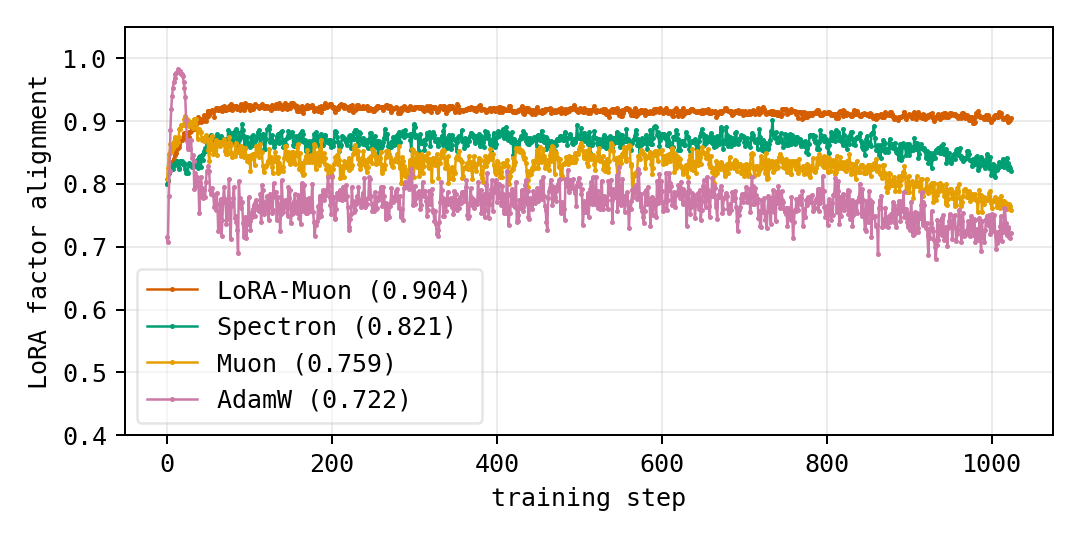}
  \caption{Alignment of the two tangent components $\Delta A B^\T$ and $A\Delta B^\T$ across
  training in the TinyShakespeare LoRA-Muon study. The components rapidly become and remain
  strongly aligned, so the triangle-inequality relaxation behind the $\eta/2$ split stays nearly tight
  throughout training rather than loosening over time.}
  \label{fig:alignment}
\end{figure}

\subsection{LMOs for steepest descent under unitary-invariant norms}\label{subsec:projector-form}

Let $P_A = A(A^\T A)^{-1}A^\T$ and $P_B = B(B^\T B)^{-1}B^\T$.
And following \cite{pethick2025normconstrainedlmos}, write,
\begin{equation}
  \operatorname{LMO}_{\norm{\cdot}}(X)
  \in
  \argmin_{\norm{U} \le 1} \ip{X}{U}.
\end{equation}

Unitary invariance lets the two decoupled constraints in \eqref{eq:lora-decoupled-a}--\eqref{eq:lora-decoupled-b} collapse onto the projector terms.
Let $S_A = A^\T A$ and $S_B = B^\T B$. If $B = \widetilde{B} S_B^{1/2}$ with $\widetilde{B}^\T \widetilde{B} = I$, then $\norm{\Delta A B^\T} = \norm{\Delta A S_B^{1/2}\widetilde{B}^\T} = \norm{\Delta A S_B^{1/2}}$,
and the $A$-side objective depends only on the column projector $P_B$; the $B$ side is identical
with $A$ and $B$ swapped.
Then the two decoupled subproblems admit especially clean projector-form solutions for any
unitarily invariant norm:
\begin{equation}
  \begin{aligned}
    \Delta A^\star B^\T
    &=
    \frac{\eta}{2}\operatorname{LMO}_{\norm{\cdot}}(G_t P_B),
    \\
    A \Delta B^{\star\T}
    &=
    \frac{\eta}{2}\operatorname{LMO}_{\norm{\cdot}}(P_A G_t).
  \end{aligned}
  \label{eq:projector-lmo}
\end{equation}
Equation \eqref{eq:projector-lmo} is the key formula in the derivation. It already shows both the
connection to Muon and the connection to gauge symmetry. If $P_A = P_B = I$,
we recover ambient Muon. If gauge transformations leave $P_A$ and $P_B$ unchanged, then the
induced update on $W$ is unchanged as well.

\subsection{Spectral specialization: LoRA-Muon}\label{subsec:closed-form-factor}
LoRA-Muon is the spectral specialization of \eqref{eq:projector-lmo}. Setting
$\norm{\cdot} = \norm{\cdot}_{2 \to 2}$ gives,
$ \operatorname{LMO}_{\norm{\cdot}_{2 \to 2}}(X) = -\msign(X) $,
so the projector-form update becomes,
\begin{equation}
  \begin{aligned}
    \Delta A^\star B^\T &= -\frac{\eta}{2}\msign(G_t P_B), \\
    \qquad
    A \Delta B^{\star\T} &= -\frac{\eta}{2}\msign(P_A G_t).
  \end{aligned}
  \label{eq:projector-form}
\end{equation}
Let $S_A = A^\T A$ and $S_B = B^\T B$. Using
$\nabla_A f = G_t B$ and $\nabla_B f = G_t^\T A$, one convenient exact factorization of
\eqref{eq:projector-form} is
\begin{align}
  \Delta A^\star
  &=
  -\frac{\eta}{2}\,
  \msign\!\big(\nabla_A f \, S_B^{-1/2}\big)\,
  S_B^{-1/2},
  \label{eq:delta-a}
  \\
  \Delta B^\star
  &=
  -\frac{\eta}{2}\,
  \msign\!\big(\nabla_B f \, S_A^{-1/2}\big)\,
  S_A^{-1/2}.
  \label{eq:delta-b}
\end{align}
This gives the ideal LoRA-Muon update. In words, we whiten the factor gradients by the current Gram
geometry and then apply the same spectral descent rule that ambient Muon applies in full matrix
space. \appendixref{subsec:appendix-direct} derives this update directly from the decoupled
subproblems, and \appendixref{subsec:appendix-orthog} together with
\appendixref{subsec:appendix-invroot} give the numerical realization.

\begin{algorithm}[t]
\caption{LoRA-Muon}
\label{alg:lora-muon}
\small
\begin{algorithmic}[1]
\REQUIRE Factors $A_t \in \R^{m \times r}$, $B_t \in \R^{n \times r}$, previous first moments $m_{t-1}^A \in \R^{m \times r}$, $m_{t-1}^B \in \R^{n \times r}$, learning rate $\eta_t$, momentum $\beta$, weight decay $\lambda$
\STATE $g_t^A \gets \nabla_A f(W_{\mathrm{pre}} + A_t B_t^\T)$ and $g_t^B \gets \nabla_B f(W_{\mathrm{pre}} + A_t B_t^\T)$
\STATE $m_t^A \gets \beta m_{t-1}^A + (1-\beta) g_t^A$ and $m_t^B \gets \beta m_{t-1}^B + (1-\beta) g_t^B$
\STATE $S_A \gets A_t^\T A_t$ and $S_B \gets B_t^\T B_t$
\STATE $R_A, R_B \gets \texttt{matrix\_invroot}(\operatorname{stack}([S_A, S_B]))$
\STATE $\Delta A_t \gets -\frac{\eta_t}{2}\msign(m_t^A R_B) R_B$
\STATE $\Delta B_t \gets -\frac{\eta_t}{2}\msign(m_t^B R_A) R_A$
\STATE $s_t \gets \sqrt{1 - \lambda \eta_t}$
\STATE $A_{t+1} \gets s_t A_t + s_t^{-1}\Delta A_t$
\STATE $B_{t+1} \gets s_t B_t + s_t^{-1}\Delta B_t$
\STATE \textbf{return} $A_{t+1}, B_{t+1}, m_t^A, m_t^B$
\end{algorithmic}
\end{algorithm}

\subsection{Split weight decay}\label{subsec:split-weight-decay}

Applying decoupled weight decay factor-wise yields updates of the form,
\begin{align}
  W_{t+1}^{\text{naive}}
  &= ((1 - \lambda\eta_t)A_t + \Delta A_t^\star) ((1 - \lambda\eta_t)B_t + \Delta B_t^\star)^T \\
  &= (1 - \lambda\eta_t)^2 A_t B_t^T + (1 - \lambda\eta_t) ( A_t (\Delta B_t^\star)^T + \Delta A_t^\star B_t^T) + \Delta A_t^\star (\Delta B_t^\star)^T.
\end{align}
But for training dynamics in the full-rank and low-rank settings to match (and therefore allow us to transfer hyperparameters), we must apply the (decoupled) weight decay term on the composed weight $W = A B^T \in \mathcal{M}_r$:
\begin{equation}
    W_{t+1}^{\text{expected}}
    = (1 - \lambda\eta_t) \underbrace{A_t B_t^T}_{W_t} + \underbrace{A_t (\Delta B_t^\star)^T + \Delta A_t^\star B_t^T}_{\Delta W_t}.
\end{equation}
To fix this mismatch, we set $s = \sqrt{1 - \lambda \eta_t}$ and instead update LoRA factors as follows,
\begin{equation}
  A_{t+1} = s A_t + \frac{1}{s}\Delta A_t^\star,
  \qquad
  B_{t+1} = s B_t + \frac{1}{s}\Delta B_t^\star.
  \label{eq:split-wd}
\end{equation}
This results in updates of the form,
\begin{equation}
    W_{t+1}^{\text{split-weight-decay}}
    =
    W_{t+1}^{\text{expected}}
    + \frac{1}{s^2} \Delta A_t^\star (\Delta B_t^\star)^T
\end{equation}
where the second (error) term approaches $0$ as we decay the learning rate. \appendixref{subsec:appendix-splitwd} shows that \eqref{eq:split-wd} preserves the gauge-invariance property which we will discuss in the next secion.

The closest prior ideas we know are Frobenius decay for factorized layers, which regularizes the product $AB^\T$ directly \cite{khodak2021initialization}. To the best of our knowledge, however, prior LoRA optimizers do not implement that correction as the split decoupled update \eqref{eq:split-wd}.

Algorithm~\ref{alg:lora-muon} gives the LoRA-Muon step. This is the LoRA-Muon update analyzed and used throughout the paper. 

\section{Gauge Symmetry}\label{sec:gauge}

This section proves that the same projector-form update is invariant to the
arbitrary choice of LoRA factorization.

\subsection{Gauge action}\label{subsec:gauge-action}
Two factor pairs $(A, B)$ and $(A', B')$ represent the same low-rank matrix if
\begin{equation}
  A' = AR,
  \qquad
  B' = BR^{-\T},
  \qquad
  R \in \GL(r).
\end{equation}
The scalar case $R = cI$ is the rescaling used in our experiments and in the gauge-rebalancing
algorithm, but the theory below is formulated for the full gauge group.

\begin{proposition}[Gauge invariance of the projectors]
Let $A \in \R^{m \times r}$ and $B \in \R^{n \times r}$ have full column rank, and define
\begin{equation}
  P_A := A(A^\T A)^{-1}A^\T,
  \qquad
  P_B := B(B^\T B)^{-1}B^\T.
\end{equation}
Then for every $R \in \GL(r)$, the transformed factors $A' = AR$ and $B' = BR^{-\T}$ satisfy
$P_{A'} = P_A$ and $P_{B'} = P_B$.
\end{proposition}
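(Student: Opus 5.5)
The plan is a direct algebraic substitution, handling $P_A$ first and then obtaining $P_B$ by the same argument with $R$ replaced by $R^{-\T}$.

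\textbf{Well-definedness.} Since $R \in \GL(r)$ and $A$ has full column rank, $A' = AR$ also has full column rank. Indeed, $A'x = 0$ implies $A(Rx)=0$, hence $Rx=0$, hence $x=0$. Therefore $A'^\T A' = R^\T (A^\T A) R$ is invertible, and $P_{A'}$ is well defined. The same reasoning with $R^{-\T} \in \GL(r)$ shows that $B' = BR^{-\T}$ has full column rank.

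\textbf{The $A$-side computation.} Substituting gives
\begin{equation*}
  P_{A'} = AR\,\big(R^\T A^\T A R\big)^{-1} R^\T A^\T .
\end{equation*}
I will use the inverse of a product of three invertible $r\times r$ matrices:
\begin{equation*}
  \big(R^\T (A^\T A) R\big)^{-1} = R^{-1}(A^\T A)^{-1}R^{-\T}.
\end{equation*}
The factors $R R^{-1}$ and $R^{-\T} R^\T$ then cancel, leaving $A(A^\T A)^{-1}A^\T = P_A$.

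\textbf{The $B$-side computation.} This is identical with $S := R^{-\T}$ in place of $R$. The computation above used only $R \in \GL(r)$, and $S$ is also in $\GL(r)$. Hence $P_{B'} = P_B$.

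\textbf{Alternative argument.} One could instead note that $P_A$ is the orthogonal projector onto $\operatorname{col}(A)$, and that $\operatorname{col}(AR) = \operatorname{col}(A)$ because $R$ is invertible. This reduces the claim to the uniqueness of orthogonal projectors onto a subspace.

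There is no real obstacle. The only point needing care is justifying that $(A'^\T A')^{-1}$ exists, which is exactly the full-column-rank check above, before splitting the inverse into $R^{-1}(A^\T A)^{-1}R^{-\T}$.
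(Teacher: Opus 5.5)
Your proposal is correct and follows the paper's proof: substitute $A' = AR$, split $(R^\T A^\T A R)^{-1} = R^{-1}(A^\T A)^{-1}R^{-\T}$ so that the $R$ factors cancel, and handle $P_B$ the same way with $R^{-\T}$ in place of $R$. Your explicit check that $A'$ and $B'$ keep full column rank, so that $P_{A'}$ and $P_{B'}$ are well defined, is a detail the paper leaves implicit.
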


\begin{proof}
For $A' = AR$, $P_{A'} = AR(R^\T A^\T A R)^{-1}R^\T A^\T = A(A^\T A)^{-1}A^\T = P_A$. The proof for $P_B$ is identical.
\end{proof}

\subsection{Gauge invariance of the update}\label{subsec:gauge-update}
\begin{theorem}[Gauge invariance of the induced projector-form LMO update]
Let $A' = AR$ and $B' = BR^{-\T}$ with $R \in \GL(r)$, and let $W = AB^\T = A'B'^\T$. Define
\begin{equation}
  P_A := A(A^\T A)^{-1}A^\T,
  \qquad
  P_B := B(B^\T B)^{-1}B^\T,
\end{equation}
let $G_t = \nabla_W f(W_{\mathrm{pre}} + W)$, and fix any deterministic choice of
$\operatorname{LMO}_{\norm{\cdot}}(\cdot)$ in the projector-form update derived in
\secref{sec:derive-lora-muon},
\begin{equation}
  \Delta W
  =
  \frac{\eta}{2}\operatorname{LMO}_{\norm{\cdot}}(G_t P_B)
  +
  \frac{\eta}{2}\operatorname{LMO}_{\norm{\cdot}}(P_A G_t),
\end{equation}
which is exactly \eqref{eq:projector-lmo}. Then the induced $\Delta W$ is the same whether it is
computed from $(A,B)$ or from $(A',B')$. In particular, the spectral specialization
$\operatorname{LMO}_{\norm{\cdot}_{2 \to 2}}(X) = -\msign(X)$ yields gauge invariance of the ideal
LoRA-Muon update \eqref{eq:projector-form}.
\end{theorem}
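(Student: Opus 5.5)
The plan is to observe that the update $\Delta W$ depends on the factors $(A,B)$ only through three quantities: the ambient gradient $G_t$, the projector $P_A$, and the projector $P_B$. If each of these is unchanged when we pass from $(A,B)$ to $(A',B')$, then the two arguments $G_t P_B$ and $P_A G_t$ fed to the LMO are literally the same matrices. The LMO is a fixed deterministic map, so identical inputs give identical outputs, and $\Delta W$ coincides. The argument is therefore a composition of two invariance facts with a determinism remark.

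\textbf{Step 1: the gradient.} Since $A'B'^\T = AR\,R^{-1}B^\T = AB^\T = W$, we have $W_{\mathrm{pre}} + W$ identical in both parametrizations. Hence $G_t = \nabla_W f(W_{\mathrm{pre}}+W)$ is the same matrix, as it is a function of the composed weight only.

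We should remark that the factor gradients $\nabla_A f = G_t B$ do change under the gauge. This does not matter, because the statement is phrased in terms of $G_t$ and the projectors, not the factor gradients.

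\textbf{Step 2: the projectors.} Invoke the preceding Proposition (gauge invariance of the projectors), which gives $P_{A'} = P_A$ and $P_{B'} = P_B$. For completeness on the $B$ side, substitute $B' = BR^{-\T}$:
\[
B'(B'^\T B')^{-1}B'^\T = BR^{-\T}\bigl(R^{-1}B^\T B R^{-\T}\bigr)^{-1}R^{-1}B^\T = BR^{-\T}R^\T (B^\T B)^{-1} R R^{-1} B^\T = P_B.
\]
Full column rank of $A', B'$ follows from that of $A, B$ and invertibility of $R$, so the projectors are well defined.

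\textbf{Step 3: conclusion and spectral case.} Combining Steps 1 and 2, $G_t P_{B'} = G_t P_B$ and $P_{A'} G_t = P_A G_t$. Applying the same deterministic selection $\operatorname{LMO}_{\norm{\cdot}}$ gives equal summands, hence equal $\Delta W$.

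For the spectral case, $-\msign$ is itself deterministic under the stated SVD convention $\msign(X) = UV^\T$. That convention is independent of the choice of SVD, since $UV^\T$ is the orthogonal polar factor on the range of $X$. So \eqref{eq:projector-form} is gauge invariant.

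The only point I expect to require care is exactly this well-definedness. The argmin in the LMO may be set-valued, which is why the theorem fixes a deterministic choice. I would spend a sentence justifying that $\msign$ does not depend on the SVD chosen, for instance by writing $\msign(X) = X (X^\T X)^{+1/2}$ with the pseudo-inverse square root.
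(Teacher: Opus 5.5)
Your proposal is correct and matches the paper's proof: $G_t$ is invariant because $A'B'^\T = AB^\T$, $P_A$ and $P_B$ are invariant by the preceding proposition, so the LMO inputs coincide and a fixed deterministic LMO returns the same $\Delta W$. Your added remark on the well-definedness of $\msign$ is a worthwhile refinement of the paper's one-line treatment. Since $G_tP_B$ and $P_AG_t$ have rank at most $r$, $\msign$ is a genuine function only under the compact-SVD (pseudo-inverse square root) convention you propose, and not for an arbitrary thin SVD that retains zero singular values.
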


Section~\ref{subsec:projector-form} already used unitary invariance to derive
\eqref{eq:projector-lmo}. Once that form is in hand, the gauge claim below follows only from the
fact that the projector arguments do not change across equivalent factorizations.

\begin{proof}
By the previous proposition, the projectors are gauge-invariant. Since
$W = AB^\T = A'B'^\T$, the ambient gradient $G_t = \nabla_W f(W_{\mathrm{pre}} + W)$ is also
unchanged by the gauge transformation. Therefore the two oracle inputs $G_t P_B$ and $P_A G_t$
are themselves gauge-invariant. Applying the same deterministic
$\operatorname{LMO}_{\norm{\cdot}}$ to the same two arguments therefore returns the same two tangent
terms before and after gauge transformation, so their sum $\Delta W$ is unchanged as an element of
$T_W \Mcal_r$.
\end{proof}

For the spectral specialization, an exact factor-form equivariance proof is given in
\appendixref{subsec:appendix-factor-equiv}.

The same symmetry also justifies scalar gauge rebalancing as a numerical conditioning tool. We can
choose a better-conditioned representative of the same factorization class without changing the
underlying LoRA-Muon step on $W$, provided the first moments are transported with the factors.
\appendixref{subsec:appendix-gauge-rebalancing} gives the rebalancing rule, and
\appendixref{subsec:appendix-moments} proves the moment-transport identity. Empirically,
Figure~\ref{fig:transfer-sweeps-grid} checks the corresponding stability claim: LoRA-Muon's
learning-rate curve remains nearly unchanged under large scalar factor rescalings, whereas Spectron
is sensitive to the same arbitrary choice of representative.

\section{Spectron and LoRA-RITE Through the Same Lens}\label{sec:spectron-rite}

We now compare LoRA-Muon against the closest low-rank optimizers in the draft: Spectron and
LoRA-RITE. Two points matter most. Spectron is not gauge-invariant, so its behavior can change
under arbitrary factor rescalings; \appendixref{subsec:appendix-spectron} proves this with an
explicit scalar counterexample. LoRA-RITE's simplified QR-coordinate core, by contrast, realizes
the same spectral update as LoRA-Muon in different coordinates.

\subsection{Spectron as spectral renormalization in factor coordinates}\label{subsec:spectron}
Spectron \cite{janson2026spectron} is the closest baseline, but it chooses its trust radius from
the current factor norms rather than from the quotient geometry of the induced low-rank update.
Consequently, arbitrary gauge rescalings of the same matrix can change Spectron's realized step
size, while LoRA-Muon leaves the induced update on $W$ unchanged. We therefore view Spectron as a
spectral, factor-coordinate renormalization heuristic rather than the gauge-invariant spectral
steepest-descent update studied here; \appendixref{subsec:appendix-spectron-factor-coordinates}
gives the detailed radius formula, gauge-sensitivity discussion, and reference update.

\subsection{LoRA-RITE as the same spectral update in QR coordinates}\label{subsec:lorarite}
The simplified QR-coordinate core of LoRA-RITE realizes the same spectral steepest-descent update
as LoRA-Muon, but writes it in QR coordinates rather than in Gram-matrix or projector form. This
identifies a shared ideal spectral core, not an identity between the full optimizers: LoRA-Muon is
QR-free and first-moment-only, while the original LoRA-RITE optimizer carries additional transported
second-moment and escaped-mass state. \appendixref{subsec:appendix-lorarite-qr} gives the
QR-coordinate derivation and proves the algebraic equivalence to the LoRA-Muon factor update.

\section{Experiments}\label{sec:experiments}
Every quantitative result here comes from TinyShakespeare character-level language modeling on the
corpus popularized by
\cite{karpathy2015charrnn}.

We train causal Transformer language models in the original Transformer style
\cite{vaswani2017attention}, but with the concrete architectural choices used in our experiments:
bias-free linear layers; self-attention with query-key normalization in the style of
\cite{henry2020qknorm}, RoPE \cite{su2021roformer}, and FlexAttention \cite{dong2024flexattention};
GELU MLPs \cite{hendrycks2016gelu} with $4\times$ hidden-width expansion; RMS normalization
without affine parameters \cite{zhang2019rmsnorm}; and residual scaling $\alpha = 1/(2L)$
following the modular-norm-style parametrization of \cite{large2024modular}. Each main sweep
uses a dense reference budget of roughly 1M tokens. For the LoRA sweeps, we compute the FLOP cost
of the dense and low-rank runs and increase the low-rank step budget as needed so every LoRA run is
compute-matched to the dense baseline. We apply LoRA methods only to the linear layers in the Transformer backbone,
and the rest with (dense) AdamW \cite{kingma2015adam,loshchilov2019adamw}.
We sweep over learning rates [0.01, 1.0] on a log scale for both the dense and LoRA runs, and we select the best tested learning rate by lowest seed-mean validation loss for each plotted rank point. We report standard errors across six seeds per plotted rank point.
The exact compute-match multipliers appear in
\appendixref{subsec:appendix-flops}.

\begin{figure}[tbp]
  \centering
  \begin{minipage}{0.49\linewidth}
    \centering
    \includegraphics[width=\linewidth]{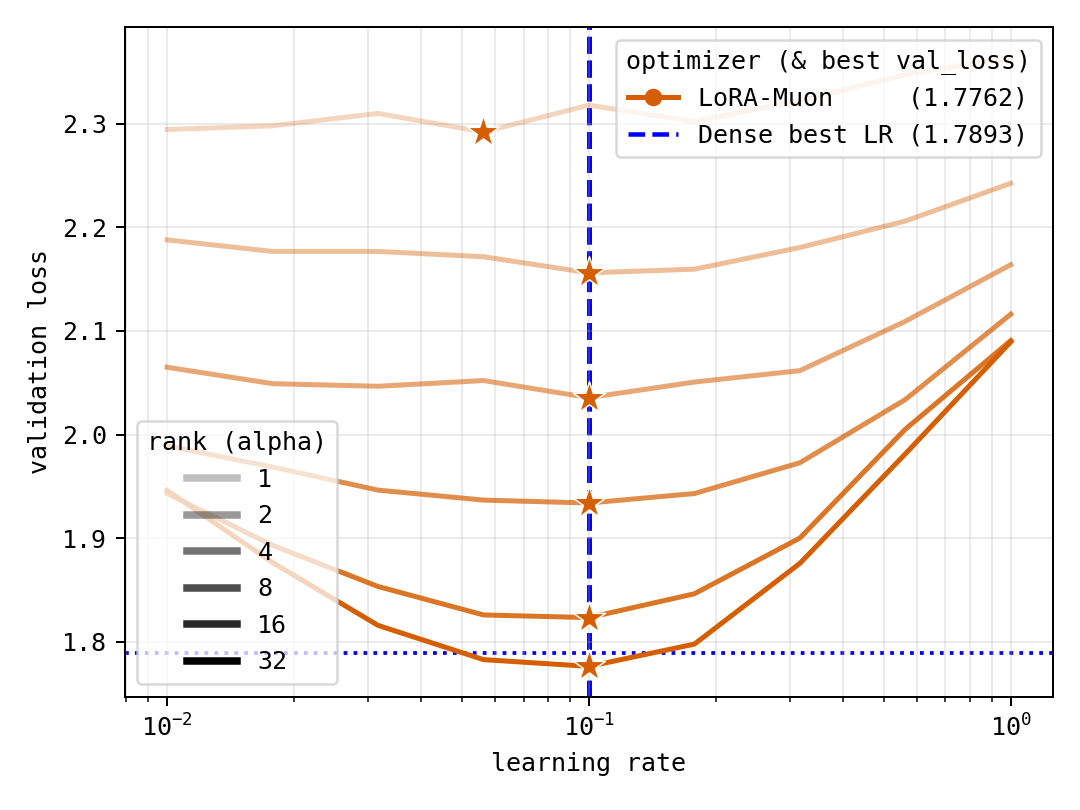}\\[-0.5ex]
    \small Rank sweep
  \end{minipage}\hfill
  \begin{minipage}{0.49\linewidth}
    \centering
    \includegraphics[width=\linewidth]{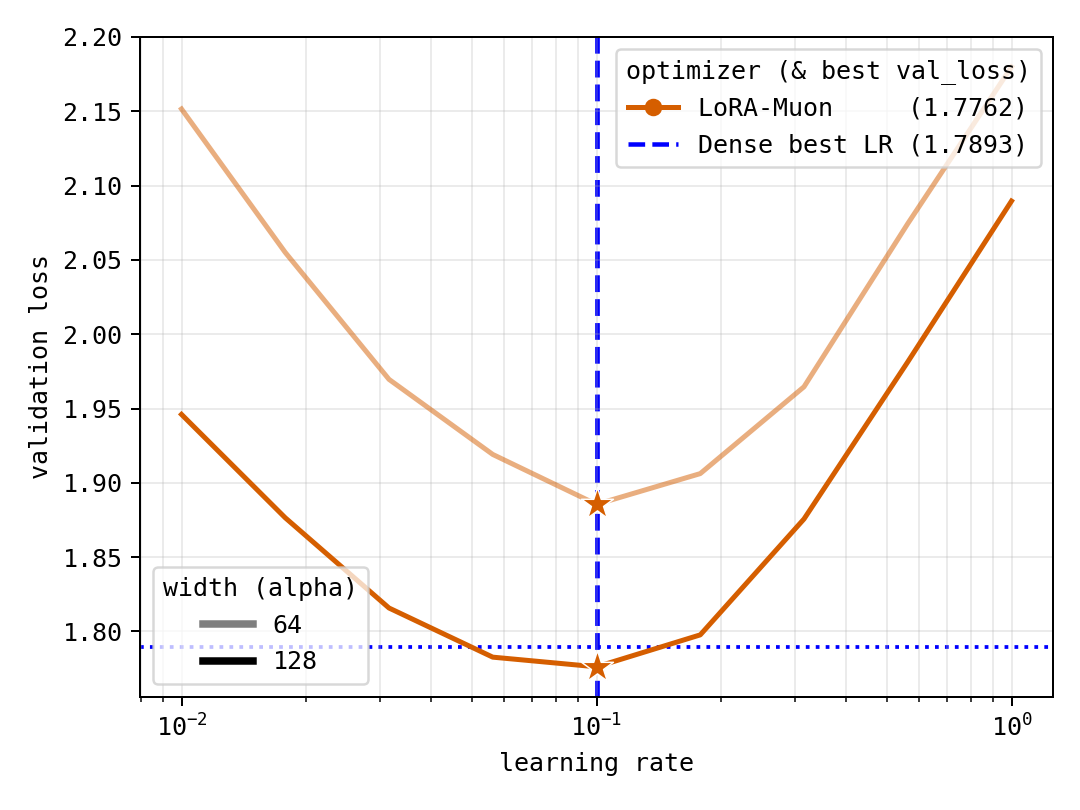}\\[-0.5ex]
    \small Width sweep
  \end{minipage}

  \vspace{1.0ex}

  \begin{minipage}{0.49\linewidth}
    \centering
    \includegraphics[width=\linewidth]{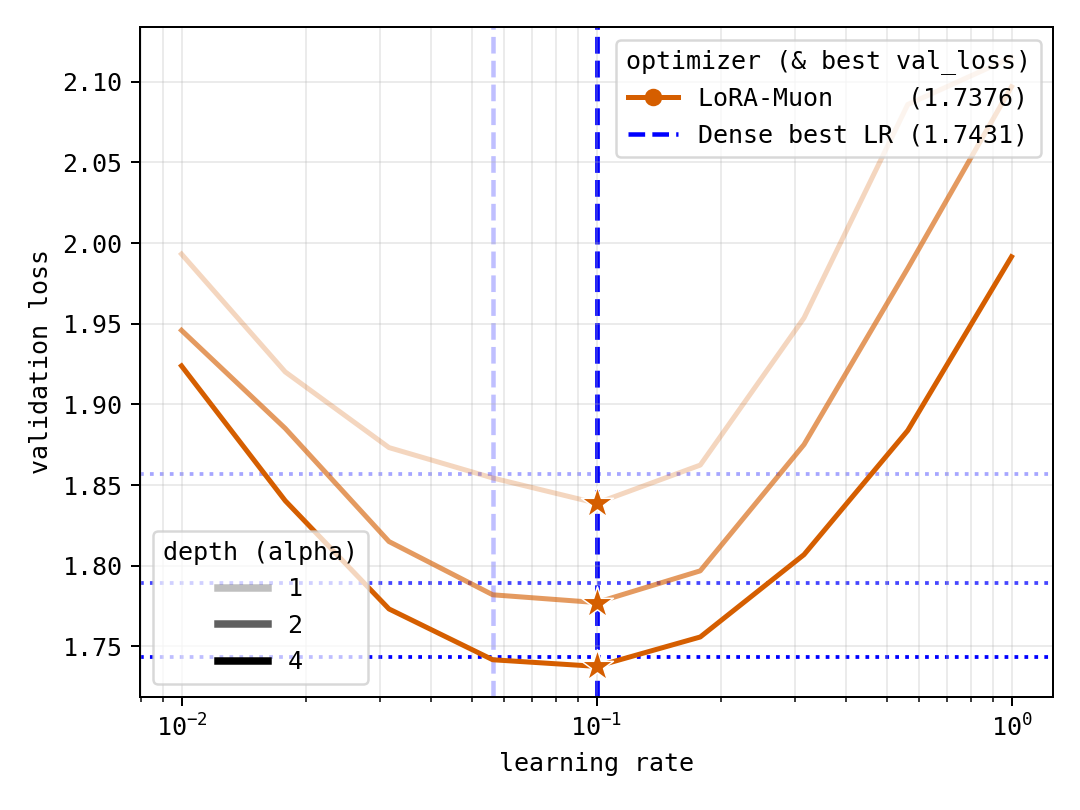}\\[-0.5ex]
    \small Depth sweep
  \end{minipage}\hfill
  \begin{minipage}{0.49\linewidth}
    \centering
    \includegraphics[width=\linewidth]{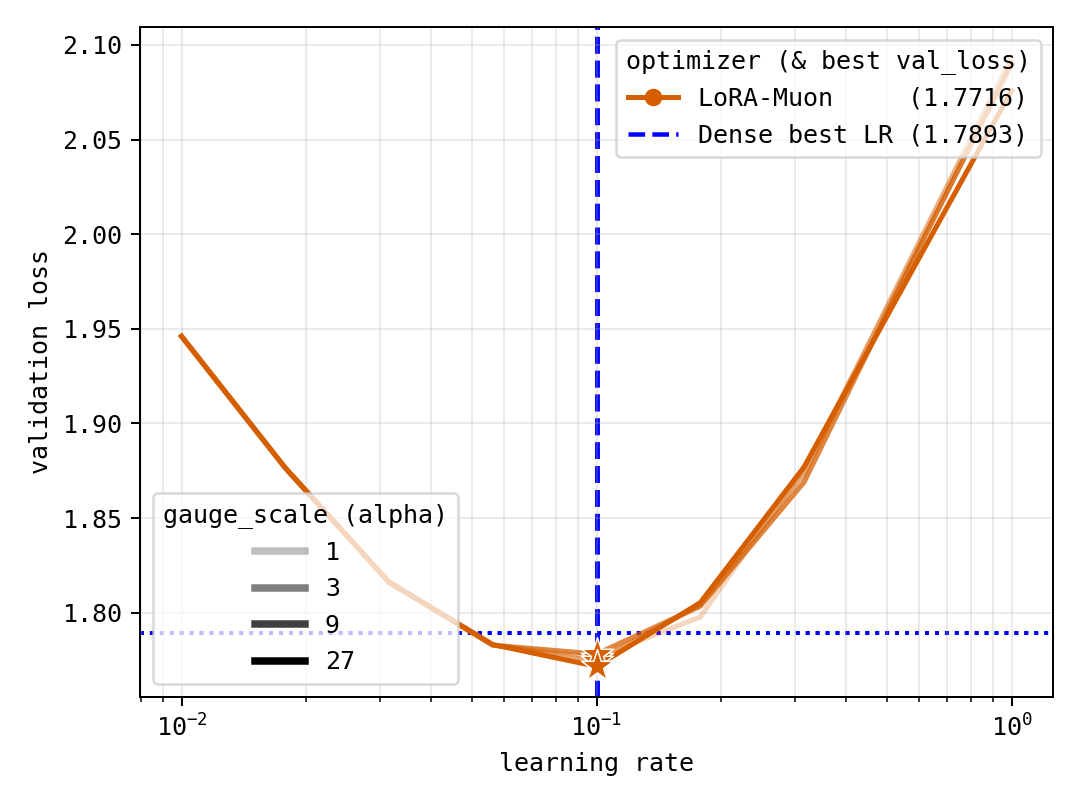}\\[-0.5ex]
    \small Initial scalar gauge-scale sweep
  \end{minipage}
  \caption{Validation loss versus learning rate across the four learning-rate-transfer sweeps: rank,
  width, depth, and scalar gauge scale. LoRA-Muon matches dense Muon's best tested learning rate
  across all four sweeps, making it a cheap proxy for hyperparameter tuning before running
  larger-scale training runs.}
  \label{fig:transfer-sweeps-grid}
\end{figure}

\subsection{Learning-rate transfer across rank, width, depth, and gauge scale}
Figure~\ref{fig:transfer-sweeps-grid} summarizes the four learning-rate-transfer sweeps. Across the
experiments reported here, the dense best tested learning rate remains at
$\texttt{lr\_linear}=0.1$, and LoRA-Muon matches that rate across rank, width, depth, and scalar
factor rescaling. In the rank sweep, the match appears already at rank $2$ and continues through
ranks $4, 8, 16,$ and $32$, with rank $1$ as the lone clear failure case. At rank $2$, LoRA-Muon
selects the same best tested learning rate as dense, with validation loss $2.156 \pm 0.005$ compared
to the dense $1.789 \pm 0.002$. At rank $32$, LoRA-Muon still selects the dense best tested learning
rate and reaches $1.776 \pm 0.002$, below the dense $1.789 \pm 0.002$ in this six-seed sweep.

The width and depth sweeps fix the LoRA rank at $32$; in these TinyShakespeare sweeps, rank-$32$
LoRA-Muon is competitive with the dense references and is lower in the depth sweep. The
factor-rescaling sweep gives the complementary gauge-invariance check: the LoRA-Muon curves stay
nearly coincident across scalar gauge choices and keep the same best tested learning rate. By contrast,
Spectron often prefers a larger learning rate around $0.178$, which is consistent with it optimizing in
a different, gauge-sensitive local geometry. In the matched $d_{\mathrm{model}}=128$,
$n_{\mathrm{layers}}=2$ sweep configuration, the rank-$2$ proxy uses only about $4.3\%$ as many
trainable parameters as the dense model and supports a $9.225\times$ larger optimizer-step budget
under the same 1M-token compute budget. The full rank-by-rank table and dense-matrix FLOP
breakdown appear in \appendixref{subsec:appendix-flops}.

\subsection{Gauge rebalancing}

\begin{figure}[tbp]
  \centering
  \begin{minipage}{0.48\linewidth}
    \centering
    \includegraphics[width=\linewidth]{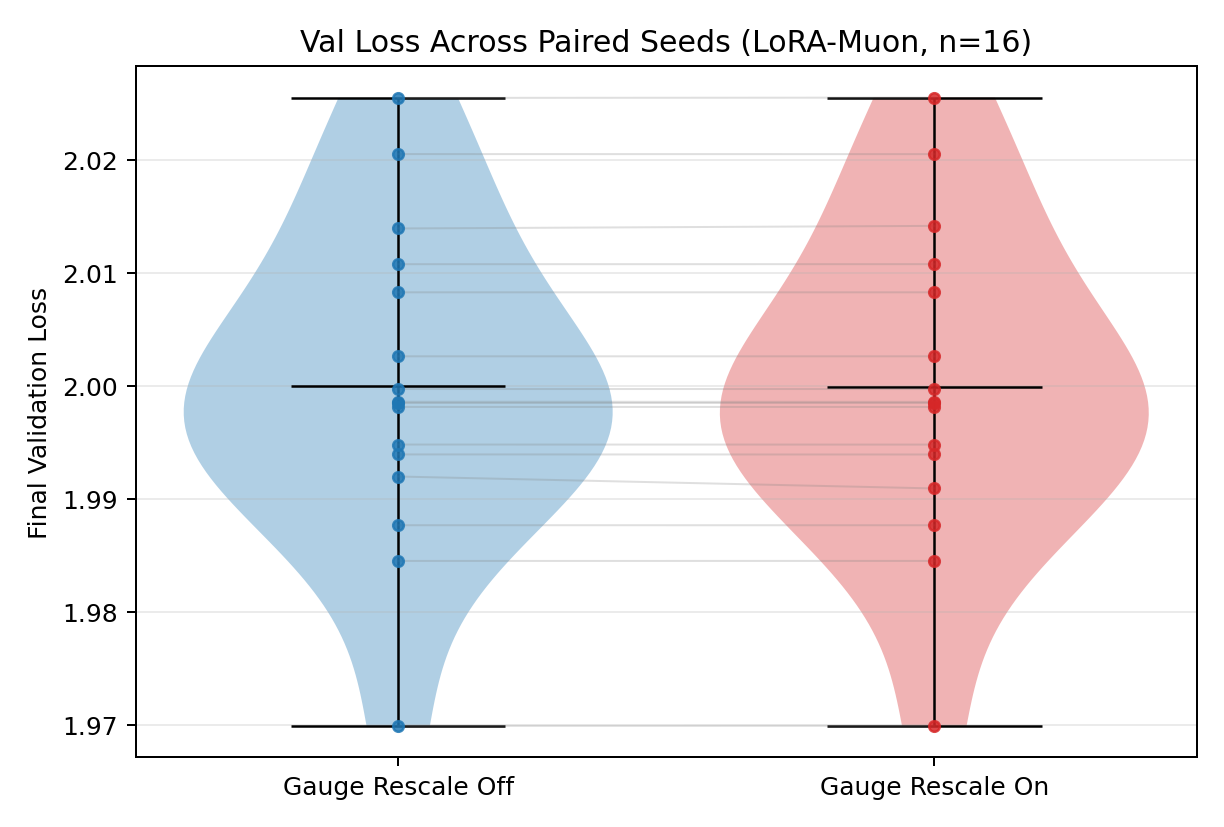}\\[-0.5ex]
  \end{minipage}\hfill
  \begin{minipage}{0.48\linewidth}
    \centering
    \includegraphics[width=\linewidth]{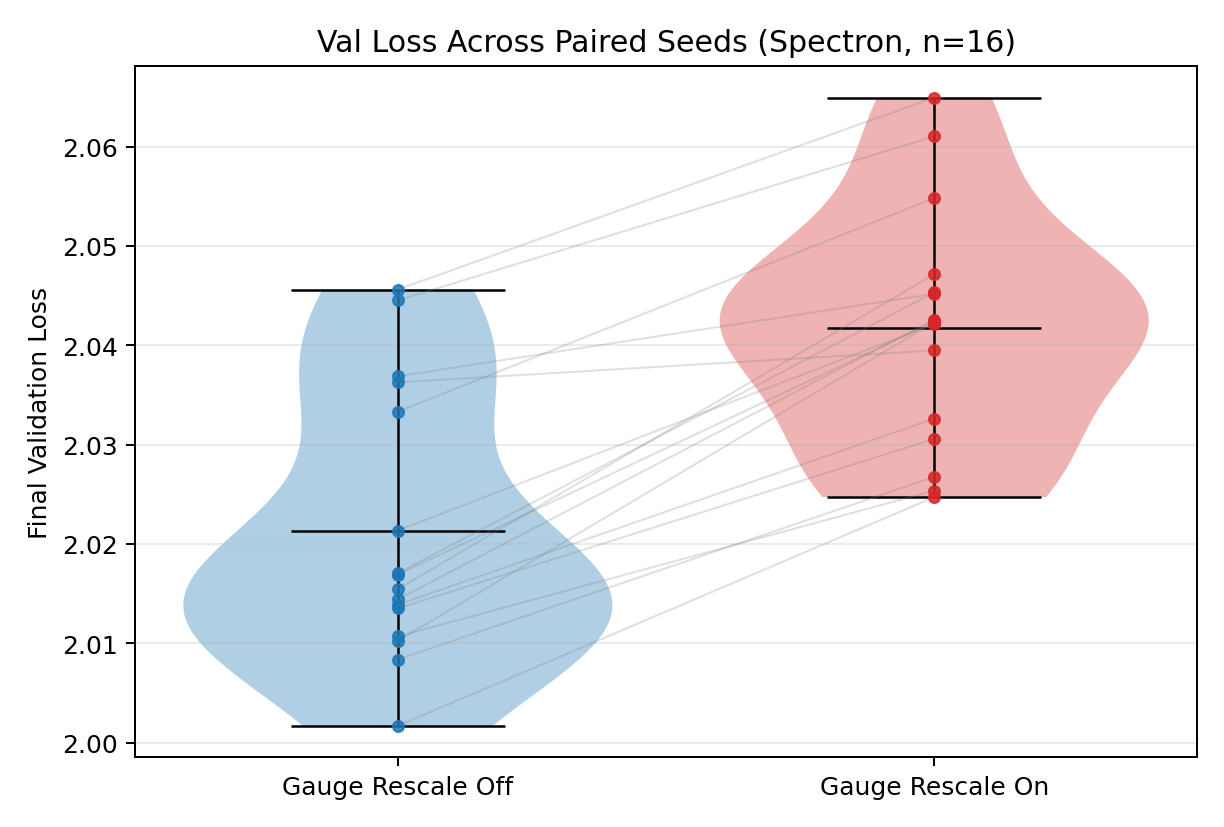}\\[-0.5ex]
  \end{minipage}
  \caption{Paired validation-loss comparisons with scalar gauge rebalancing disabled versus enabled
  for LoRA-Muon and Spectron after initializing the random LoRA factors at gauge scale $9$
  by multiplying one factor and dividing the other. LoRA-Muon's loss distribution is essentially
  unchanged by rebalancing, while Spectron's changes substantially, consistent with
  \secref{subsec:spectron}.}
  \label{fig:gauge-rebalance-violins}
\end{figure}

Here we test whether scalar gauge rebalancing acts as a harmless conditioning step or as an
optimizer-changing intervention. We initialize the random LoRA factors at gauge scale $9$ by
multiplying one factor and dividing the other, then compare paired runs with rebalancing disabled
versus enabled every step. In Figure~\ref{fig:gauge-rebalance-violins}, LoRA-Muon's mean
validation-loss change is $-5.1 \times 10^{-5}$ with a $95\%$ paired confidence interval
$[-2.0 \times 10^{-4}, 9.5 \times 10^{-5}]$, consistent with the theory that rebalancing does not
change the induced weight-space update. For Spectron, the same intervention changes validation loss
by $2.0 \times 10^{-2}$ on average, with paired confidence interval
$[1.6 \times 10^{-2}, 2.5 \times 10^{-2}]$. Thus the same factor-coordinate operation is nearly
invisible to LoRA-Muon but visible to Spectron, matching the gauge-sensitivity analysis in
\secref{subsec:spectron}.


\section{Limitations}\label{sec:limitations}
First, the experiments are intentionally small:
the empirical claims are established on TinyShakespeare-scale Transformer language models, not yet
on large-scale pretraining or downstream finetuning benchmarks. The geometric derivation is not tied
to this dataset, but the reported transfer behavior should be read as evidence for this tested regime.
Second, the direct sweeps test learning-rate transfer, especially for $\texttt{lr\_linear}$, rather than
full hyperparameter transfer. Prior work, however, show that the optimal learning rate is coupled with
other hyperparameters such as weight decay \cite{kosson2026weightdecay}, batch size, training horizon,
and momentum \cite{shulgin2026hyperscaling,islamov2026batchsize} in a predictable way.

\section{Conclusion}
LoRA-Muon is obtained by applying Muon's spectral steepest-descent principle directly on the
low-rank manifold. This viewpoint gives a simple projector-form update, explains why the update is
invariant to arbitrary LoRA factorization choices, and separates LoRA-Muon from nearby
factor-coordinate methods such as Spectron. It also clarifies the relationship to LoRA-RITE: the
simplified QR-coordinate core is the same spectral update written in a different coordinate system,
whereas LoRA-Muon realizes it without QR factorizations or transported second moments.

Empirically, the resulting optimizer makes low-rank training useful for dense Muon learning-rate
search. In the TinyShakespeare sweeps, compute-matched LoRA-Muon proxies recover the dense best
tested learning rate across rank, width, depth, and scalar factor rescaling, while gauge rebalancing
leaves LoRA-Muon essentially unchanged but visibly affects Spectron. The broader lesson is that the
low-rank parametrization should not determine the optimizer geometry: for LoRA, the useful update is
the one defined by the low-rank matrix manifold itself.


\newpage
\bibliographystyle{unsrtnat}
\bibliography{paper}


\newpage

\appendix

\section{Proofs}\label{sec:appendix-proofs}
\subsection{From the decoupled subproblems to the closed-form factor updates}\label{subsec:appendix-direct}
We derive \eqref{eq:delta-a}--\eqref{eq:delta-b} directly from the decoupled subproblems
\eqref{eq:lora-decoupled-a}--\eqref{eq:lora-decoupled-b}, without passing through
\eqref{eq:projector-form}.

\begin{proposition}[Direct derivation of the factor update]
Consider the decoupled spectral subproblems
\begin{equation}
  \Delta A^\star
  =
  \argmin_{\Delta A}
  \ip{G_t}{\Delta A B^\T}
  \quad
  \text{s.t.}
  \quad
  \norm{\Delta A B^\T}_{2 \to 2} \le \frac{\eta}{2},
\end{equation}
\begin{equation}
  \Delta B^\star
  =
  \argmin_{\Delta B}
  \ip{G_t}{A \Delta B^\T}
  \quad
  \text{s.t.}
  \quad
  \norm{A \Delta B^\T}_{2 \to 2} \le \frac{\eta}{2}.
\end{equation}
If $S_A = A^\T A$, $S_B = B^\T B$, $\nabla_A f = G_t B$, and $\nabla_B f = G_t^\T A$, then their
solutions are
\begin{equation}
  \Delta A^\star
  =
  -\frac{\eta}{2}\msign(\nabla_A f\, S_B^{-1/2}) S_B^{-1/2},
\end{equation}
\begin{equation}
  \Delta B^\star
  =
  -\frac{\eta}{2}\msign(\nabla_B f\, S_A^{-1/2}) S_A^{-1/2}.
\end{equation}
\end{proposition}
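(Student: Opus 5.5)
The plan is to reduce each decoupled subproblem, by a change of variables, to the ambient Muon LMO of Section~\ref{subsec:ambient-muon} applied to a whitened matrix. I treat the $A$-side problem in detail; the $B$-side follows by the symmetry $A \leftrightarrow B$, $G_t \leftrightarrow G_t^\T$.

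First, factor $B = \widetilde{B} S_B^{1/2}$ with $\widetilde{B} := B S_B^{-1/2}$. Since $S_B$ is symmetric positive definite (because $B$ has full column rank), $\widetilde{B}^\T \widetilde{B} = S_B^{-1/2} S_B S_B^{-1/2} = I$. Introduce the new variable $X := \Delta A\, S_B^{1/2} \in \R^{m \times r}$. This map is a bijection on $\R^{m\times r}$, with inverse $\Delta A = X S_B^{-1/2}$.

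Second, rewrite the subproblem in the variable $X$:
\begin{itemize}
\item The constraint becomes $\norm{\Delta A B^\T}_{2\to2} = \norm{X \widetilde{B}^\T}_{2\to2} = \norm{X}_{2\to2}$. This holds because $\widetilde{B}$ has orthonormal columns, so $(X\widetilde{B}^\T)(X\widetilde{B}^\T)^\T = XX^\T$, and the two matrices have the same nonzero singular values.
\item The objective becomes $\ip{G_t}{\Delta A B^\T} = \operatorname{tr}(G_t^\T \Delta A B^\T) = \ip{G_t B}{\Delta A} = \ip{\nabla_A f}{X S_B^{-1/2}} = \ip{\nabla_A f\, S_B^{-1/2}}{X}$, using the symmetry of $S_B^{-1/2}$.
\end{itemize}
The subproblem is therefore exactly the ambient spectral LMO problem on $\R^{m\times r}$: minimize $\ip{\nabla_A f\,S_B^{-1/2}}{X}$ subject to $\norm{X}_{2\to2}\le \eta/2$.

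Third, invoke the Muon LMO from Section~\ref{subsec:ambient-muon}, which gives $X^\star = -\frac{\eta}{2}\msign(\nabla_A f\, S_B^{-1/2})$. Undoing the change of variables yields $\Delta A^\star = X^\star S_B^{-1/2}$, which is the claimed formula. The $B$-side is identical: use $A = \widetilde{A} S_A^{1/2}$, $Y := \Delta B\, S_A^{1/2}$, and $\ip{G_t}{A\Delta B^\T} = \ip{G_t^\T A}{\Delta B}$.

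The main obstacle is the norm identity $\norm{X\widetilde{B}^\T}_{2\to2}=\norm{X}_{2\to2}$ together with the bijectivity of the reparametrization. Both are needed so that the feasible set maps exactly onto the spectral ball, and not onto a subset or a superset of it. A secondary caveat is non-uniqueness: when $\nabla_A f\,S_B^{-1/2}$ is rank-deficient, the argmin is a set, and the formula returns the element selected by the standard $\msign = UV^\T$ convention. I would state the result as ``a solution'' in that case.
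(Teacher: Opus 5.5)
Your proposal is correct and follows the paper's proof essentially step for step: the same whitening $\widetilde{B} = B S_B^{-1/2}$, the same substitution $X = \Delta A\, S_B^{1/2}$, orthonormality of $\widetilde{B}$ to reduce the constraint to $\|X\|_{2\to2}$, the ambient spectral LMO, and the analogous argument on the $B$ side. Your explicit remarks on full column rank, bijectivity of the reparametrization, and non-uniqueness of the minimizer when $\nabla_A f\, S_B^{-1/2}$ is rank-deficient are small sharpenings that the paper leaves implicit.
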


\begin{proof}
We prove the $A$ update first, starting from \eqref{eq:lora-decoupled-a}. Let
$S_B = B^\T B$ and define the orthonormalized factor
$\widetilde{B} := B S_B^{-1/2}$, so that $\widetilde{B}^\T \widetilde{B} = I$ and
$B^\T = S_B^{1/2}\widetilde{B}^\T$. Introduce the reparameterized variable
$Z_A := \Delta A S_B^{1/2} \in \R^{m \times r}$. Then
\begin{equation}
  \Delta A B^\T
  =
  \Delta A S_B^{1/2}\widetilde{B}^\T
  =
  Z_A \widetilde{B}^\T.
\end{equation}
Since $\widetilde{B}$ has orthonormal columns, unitary invariance of the spectral norm gives
\begin{equation}
  \norm{\Delta A B^\T}_{2 \to 2}
  =
  \norm{Z_A \widetilde{B}^\T}_{2 \to 2}
  =
  \norm{Z_A}_{2 \to 2}.
\end{equation}
Likewise, the objective in \eqref{eq:lora-decoupled-a} becomes
\begin{equation}
  \ip{G_t}{\Delta A B^\T}
  =
  \ip{G_t}{Z_A \widetilde{B}^\T}
  =
  \ip{G_t \widetilde{B}}{Z_A}
  =
  \ip{G_t B S_B^{-1/2}}{Z_A}.
\end{equation}
Therefore \eqref{eq:lora-decoupled-a} is equivalent to the rank-side problem
\begin{equation}
  Z_A^\star
  =
  \argmin_{Z_A \in \R^{m \times r}}
  \ip{G_t B S_B^{-1/2}}{Z_A}
  \quad
  \text{s.t.}
  \quad
  \norm{Z_A}_{2 \to 2} \le \frac{\eta}{2}.
\end{equation}
For the spectral norm, the linear minimization oracle is $-\msign$, so
\begin{equation}
  Z_A^\star
  =
  -\frac{\eta}{2}\msign(G_t B S_B^{-1/2}).
\end{equation}
Undoing the change of variables gives
\begin{equation}
  \Delta A^\star
  =
  Z_A^\star S_B^{-1/2}
  =
  -\frac{\eta}{2}\msign(G_t B S_B^{-1/2}) S_B^{-1/2}.
\end{equation}
Using $\nabla_A f = G_t B$ yields \eqref{eq:delta-a}.

For the $B$ update, start from \eqref{eq:lora-decoupled-b} and first rewrite the objective and
constraint in the equivalent transposed form
\begin{equation}
  \ip{G_t}{A \Delta B^\T}
  =
  \ip{G_t^\T}{\Delta B A^\T},
  \qquad
  \norm{A \Delta B^\T}_{2 \to 2}
  =
  \norm{\Delta B A^\T}_{2 \to 2}.
\end{equation}
Now set $S_A = A^\T A$, $\widetilde{A} := A S_A^{-1/2}$, and
$Z_B := \Delta B S_A^{1/2}$. Then $\Delta B A^\T = Z_B \widetilde{A}^\T$, so
\begin{equation}
  \norm{\Delta B A^\T}_{2 \to 2} = \norm{Z_B}_{2 \to 2},
  \qquad
  \ip{G_t^\T}{\Delta B A^\T}
  =
  \ip{G_t^\T A S_A^{-1/2}}{Z_B}.
\end{equation}
Thus \eqref{eq:lora-decoupled-b} is equivalent to
\begin{equation}
  Z_B^\star
  =
  \argmin_{Z_B \in \R^{n \times r}}
  \ip{G_t^\T A S_A^{-1/2}}{Z_B}
  \quad
  \text{s.t.}
  \quad
  \norm{Z_B}_{2 \to 2} \le \frac{\eta}{2},
\end{equation}
whose solution is
\begin{equation}
  Z_B^\star
  =
  -\frac{\eta}{2}\msign(G_t^\T A S_A^{-1/2}).
\end{equation}
Undoing the change of variables yields
\begin{equation}
  \Delta B^\star
  =
  -\frac{\eta}{2}\msign(G_t^\T A S_A^{-1/2}) S_A^{-1/2}.
\end{equation}
Using $\nabla_B f = G_t^\T A$ gives \eqref{eq:delta-b}.
\end{proof}

\subsection{Factor-form gauge equivariance}\label{subsec:appendix-factor-equiv}
The projector-form theorem in the body has an exact factor-form counterpart.

\begin{proposition}[Factor-form equivariance]
Let $A' = AR$ and $B' = BR^{-\T}$ with $R \in \GL(r)$, and define the ideal LoRA-Muon factor
increments by
\begin{equation}
  \Delta A
  =
  -\frac{\eta}{2}\msign(\nabla_A f\, S_B^{-1/2}) S_B^{-1/2},
  \qquad
  S_B = B^\T B,
\end{equation}
\begin{equation}
  \Delta B
  =
  -\frac{\eta}{2}\msign(\nabla_B f\, S_A^{-1/2}) S_A^{-1/2},
  \qquad
  S_A = A^\T A.
\end{equation}
Then the corresponding increments at $(A',B')$ satisfy
\begin{equation}
  \Delta A' = \Delta A \, R,
  \qquad
  \Delta B' = \Delta B \, R^{-\T}.
\end{equation}
\end{proposition}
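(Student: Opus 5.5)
The plan is a direct computation: express everything at $(A',B')$ in terms of $(A,B)$ and $R$. The only nontrivial ingredient is that the Gram square roots at the two representatives differ by an orthogonal matrix, and $\msign$ is equivariant under right multiplication by orthogonal matrices. I treat the $A$-side in detail; the $B$-side follows by swapping roles, using $R^{-\T}$ in place of $R$.

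\emph{Step 1 (transform the inputs).} Since $W = A'B'^\T = AB^\T$, the ambient gradient $G_t$ is unchanged. Hence
\[
\nabla_{A'} f = G_t B' = \nabla_A f\, R^{-\T},
\qquad
S_{B'} = B'^\T B' = R^{-1} S_B R^{-\T}.
\]
Similarly,
\[
\nabla_{B'} f = G_t^\T A' = \nabla_B f\, R,
\qquad
S_{A'} = R^\T S_A R.
\]

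\emph{Step 2 (orthogonal link between Gram roots).} Define
\[
M := S_B^{1/2} R^{-\T} S_{B'}^{-1/2} \in \R^{r\times r}.
\]
Then
\[
M^\T M = S_{B'}^{-1/2} R^{-1} S_B R^{-\T} S_{B'}^{-1/2} = S_{B'}^{-1/2} S_{B'} S_{B'}^{-1/2} = I,
\]
so $M$ is orthogonal. This gives $R^{-\T} S_{B'}^{-1/2} = S_B^{-1/2} M$. Transposing, and using the symmetry of $S_{B'}^{-1/2}$, gives
\[
S_{B'}^{-1/2} = M^\T S_B^{-1/2} R.
\]

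\emph{Step 3 (assemble).} With $X := \nabla_A f\, S_B^{-1/2}$, the argument at the new point is
\[
\nabla_{A'} f\, S_{B'}^{-1/2} = \nabla_A f\, R^{-\T} S_{B'}^{-1/2} = XM.
\]
If $\msign(XM) = \msign(X)M$, then
\[
\Delta A' = -\tfrac{\eta}{2}\,\msign(X)\, M M^\T S_B^{-1/2} R = \Delta A\, R.
\]
The $B$-side is identical. There one sets
\[
N := S_A^{1/2} R\, S_{A'}^{-1/2},
\]
checks $N^\T N = I$, and obtains $\Delta B' = \Delta B\, R^{-\T}$.

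\emph{Main obstacle.} The delicate point is the identity $\msign(XM) = \msign(X)M$ for orthogonal $M$, given that $X$ may be rank-deficient and the SVD is not unique. I would first argue that $\msign$ is well defined independently of the chosen SVD. It equals the polar factor $X (X^\T X)^{+1/2}$, where $(\cdot)^{+1/2}$ denotes the pseudo-inverse square root, with the convention that zero singular values are dropped. Equivalently, it is the unique $UV^\T$ built from the compact SVD.

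Equivariance then follows in either of two ways. From the SVD: if $X = U\Sigma V^\T$, then $XM = U\Sigma (M^\T V)^\T$ is again a compact SVD. From the polar formula: $(M^\T X^\T X M)^{+1/2} = M^\T (X^\T X)^{+1/2} M$.

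As a sanity check, one can also deduce the statement from the projector-form theorem. The map $\Delta A \mapsto \Delta A B^\T$ is injective because $B$ has full column rank. Gauge invariance of $\Delta A B^\T = \Delta A' B'^\T = \Delta A' R^{-1} B^\T$ then forces $\Delta A' = \Delta A R$.
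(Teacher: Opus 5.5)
Your proof is correct and follows the same route as the paper. Your $M = S_B^{1/2}R^{-\T}S_{B'}^{-1/2}$ is exactly the paper's orthogonal matrix $Q_B = S_B^{-1/2}R\,S_{B'}^{1/2}$ (the two expressions agree because $R S_{B'} = S_B R^{-\T}$). Both arguments rewrite $\nabla_{A'}f\,S_{B'}^{-1/2}$ as $\nabla_A f\,S_B^{-1/2}$ times an orthogonal matrix and then invoke right-equivariance of $\msign$.

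Your bookkeeping is actually more accurate than the paper's, whose displayed identities put $Q_B^\T$ where $Q_B$ belongs (two slips that cancel). Your remark defining $\msign$ through the compact SVD also supplies the well-definedness for rank-deficient inputs that the paper takes for granted.
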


\begin{proof}
We prove the $A$ statement; the $B$ statement is analogous. Write
$S_B = B^\T B$ and $S_B' = B'^\T B' = R^{-1} S_B R^{-\T}$. Define
\begin{equation}
  Q_B := S_B^{-1/2} R \, S_B'^{1/2}.
\end{equation}
Then
\begin{equation}
  Q_B Q_B^\T
  =
  S_B^{-1/2} R S_B' R^\T S_B^{-1/2}
  =
  S_B^{-1/2} R (R^{-1} S_B R^{-\T}) R^\T S_B^{-1/2}
  =
  I,
\end{equation}
so $Q_B$ is orthogonal. Re-arranging the definition gives
\begin{equation}
  S_B^{-1/2} R = Q_B S_B'^{-1/2},
  \qquad
  R^{-\T} S_B'^{-1/2} = S_B^{-1/2} Q_B^\T.
  \label{eq:qb-identities}
\end{equation}
Now the transformed gradient is $\nabla_A' f = \nabla_A f \, R^{-\T}$, so by
\eqref{eq:qb-identities},
\begin{equation}
  \nabla_A' f \, S_B'^{-1/2}
  =
  \nabla_A f \, R^{-\T} S_B'^{-1/2}
  =
  \nabla_A f \, S_B^{-1/2} Q_B^\T.
\end{equation}
Because $\msign(\cdot)$ is right-equivariant under orthogonal factors,
\begin{equation}
  \msign\!\big(\nabla_A' f \, S_B'^{-1/2}\big)
  =
  \msign\!\big(\nabla_A f \, S_B^{-1/2}\big) Q_B^\T.
\end{equation}
Multiplying again by $S_B'^{-1/2}$ and using \eqref{eq:qb-identities},
\begin{align*}
  \Delta A'
  &=
  -\frac{\eta}{2}
  \msign\!\big(\nabla_A' f \, S_B'^{-1/2}\big) S_B'^{-1/2} \\
  &=
  -\frac{\eta}{2}
  \msign\!\big(\nabla_A f \, S_B^{-1/2}\big) Q_B^\T S_B'^{-1/2}.
\end{align*}
Since $Q_B$ is orthogonal, taking transposes in \eqref{eq:qb-identities} gives
$Q_B^\T S_B'^{-1/2} = S_B^{-1/2} R$, hence
\begin{equation}
  \Delta A'
  =
  -\frac{\eta}{2}
  \msign\!\big(\nabla_A f \, S_B^{-1/2}\big) S_B^{-1/2} R
  =
  \Delta A \, R.
\end{equation}
\end{proof}

\subsection{Gauge invariance of the split weight-decay update}\label{subsec:appendix-splitwd}
\begin{proposition}[Gauge invariance of the split weight-decay update]
Suppose $A' = AR$, $B' = BR^{-\T}$, $\Delta A' = \Delta A R$, and
$\Delta B' = \Delta B R^{-\T}$ for some $R \in \GL(r)$. Define the split weight-decay update by
\begin{equation}
  A_{t+1} = sA_t + s^{-1}\Delta A_t,
  \qquad
  B_{t+1} = sB_t + s^{-1}\Delta B_t,
  \qquad
  s = \sqrt{1-\lambda\eta}.
\end{equation}
Then the induced product update satisfies
$A'_{t+1} B_{t+1}^{\prime\T} = A_{t+1} B_{t+1}^\T$, so the split update is gauge-invariant.
\end{proposition}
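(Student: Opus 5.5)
The plan is a direct expansion of the product, using the hypotheses $A' = AR$, $B' = BR^{-\T}$, $\Delta A' = \Delta A R$, $\Delta B' = \Delta B R^{-\T}$. These equivariance hypotheses are exactly what the factor-form equivariance proposition of \appendixref{subsec:appendix-factor-equiv} supplies for the ideal LoRA-Muon increments. The present statement takes them as given, so the argument is purely algebraic. The scalar $s = \sqrt{1-\lambda\eta}$ is the same for both gauge representatives, since it depends only on $\lambda$ and $\eta$ and not on the factors.

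First, I would write the transformed updated factors and pull $R$ out on the right:
\begin{equation*}
  A'_{t+1} = sAR + s^{-1}\Delta A R = A_{t+1} R,
  \qquad
  B'_{t+1} = sBR^{-\T} + s^{-1}\Delta B R^{-\T} = B_{t+1} R^{-\T}.
\end{equation*}
This shows that the updated pair $(A'_{t+1}, B'_{t+1})$ is again the gauge transform of $(A_{t+1}, B_{t+1})$ by the same $R$. Then the product follows immediately:
\begin{equation*}
  A'_{t+1}B'^{\T}_{t+1} = A_{t+1} R R^{-1} B_{t+1}^\T = A_{t+1}B_{t+1}^\T.
\end{equation*}
Here I use $(R^{-\T})^\T = R^{-1}$.

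As a cross-check, one could instead expand the product into its four terms: $s^2 AB^\T$, then $A\Delta B^\T + \Delta A B^\T$, then $s^{-2}\Delta A\Delta B^\T$. In each term the inner factor $RR^{-1}$ cancels, including in the cross term $\Delta A' \Delta B'^\T$. I would mention this to connect with the decomposition $W^{\text{expected}} + s^{-2}\Delta A^\star(\Delta B^\star)^\T$ in \secref{subsec:split-weight-decay}.

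There is no real obstacle here; the only point needing care is that $s$ is gauge-independent. I would also remark that, because the updated factors remain in the same gauge orbit, the invariance propagates by induction to all subsequent steps. This holds provided the increments at each step are equivariant, which the factor-form proposition guarantees.
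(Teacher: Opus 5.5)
Your proposal is correct and is essentially the paper's own proof. Like the paper, you factor $R$ out on the right to get $A'_{t+1} = A_{t+1}R$ and $B'_{t+1} = B_{t+1}R^{-\T}$, so $RR^{-1}$ cancels in the product, and you use the four-term expansion $s^2 W_t + A_t\Delta B_t^\T + \Delta A_t B_t^\T + s^{-2}\Delta A_t\Delta B_t^\T$ as an equivalent termwise check.
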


\begin{proof}
Applying \eqref{eq:split-wd} to the transformed factors gives
\begin{equation}
  A_{t+1}' = sAR + s^{-1}\Delta A R = (sA + s^{-1}\Delta A)R = A_{t+1}R,
\end{equation}
and similarly $B_{t+1}' = B_{t+1}R^{-\T}$. Therefore
$A_{t+1}' B_{t+1}'^\T = A_{t+1} B_{t+1}^\T$. Equivalently, expanding the product gives
\begin{equation}
  W_{t+1}
  =
  s^2 W_t + A_t \Delta B_t^\T + \Delta A_t B_t^\T + s^{-2}\Delta A_t \Delta B_t^\T,
\end{equation}
and each term is individually gauge-invariant once the factor increments transform equivariantly.
\end{proof}

\subsection{Spectron as spectral renormalization in factor coordinates}\label{subsec:appendix-spectron-factor-coordinates}
Spectron \cite{janson2026spectron} chooses its trust radius in factor coordinates, so gauge sensitivity
is built in from the start. Concretely, it keeps the two-factor orthogonalized update template but
chooses a single shared radius $\rho$ from the factor norms themselves:
\begin{equation}
  \rho^2 + \big(\norm{A}_{2 \to 2} + \norm{B}_{2 \to 2}\big)\rho - \eta = 0.
\end{equation}
This is a reasonable heuristic if one wants to enforce a bound on the composite update norm, but it
lives in factor coordinates rather than quotient geometry. That is why Spectron inherits the
gauge-sensitivity proved in \secref{sec:gauge}: the quantity
$\norm{A}_{2 \to 2} + \norm{B}_{2 \to 2}$ is not invariant under $(A, B) \mapsto (AR, BR^{-\T})$.
\appendixref{subsec:appendix-spectron-alg} gives the reference Spectron update used in this
comparison.

This distinction would likely matter most when finetuning starts from badly imbalanced factors.
Adapter-style finetuning can begin from factorizations whose scales are badly imbalanced, so a
gauge-sensitive trust radius responds to an arbitrary initialization artifact rather than only to
the induced update in $W$.

There is a second, Spectron-specific issue beyond gauge symmetry. LoRA-Muon and LoRA-RITE are
steepest-descent updates driven directly by the current gradient, whereas Spectron first maps
$(\eta, A, B)$ to a radius $\rho(\eta, A, B)$. As a result, its step-size selection rule remains tied to
current factor norms even near stationarity.

\secref{subsec:split-weight-decay} already showed that naive factorwise weight decay induces the
generic product-space mismatch $(1 - \eta \lambda)^2$. Spectron sits on top of that same generic issue,
but the width, depth, and gauge-scale sweeps suggest an additional Spectron-specific effect: its
nonlinear map $\eta \mapsto \rho(\eta, A, B)$ changes the realized update scale with the current factor
norms. We treat that mechanism as the leading interpretation of why Spectron often shifts from the
dense / LoRA-Muon optimum $0.1$ toward $0.1778$.


\subsection{Spectron is not gauge-invariant}\label{subsec:appendix-spectron}
\begin{proposition}[Spectron is not gauge-invariant]\label{prop:spectron-not-gauge}
In the scalar rank-one setting $A = a > 0$, $B = b > 0$, with positive orthogonalized gradient
directions, Spectron chooses a radius $\rho$ from
\begin{equation}
  \rho^2 + (\norm{A}_{2 \to 2} + \norm{B}_{2 \to 2})\rho - \eta = 0.
\end{equation}
Under scalar gauge rescaling $(A,B) \mapsto (cA, B/c)$, both the solution $\rho$ and the induced
first-order product update change with $c$. Therefore Spectron already fails scalar gauge
invariance, and hence cannot be fully gauge-invariant.
\end{proposition}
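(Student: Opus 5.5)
In the scalar rank-one setting every quantity is a positive number, so the statement reduces to an explicit computation. I would check the two claimed dependencies on $c$ directly, first for the radius and then for the product update.

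\emph{Radius.} With $A = a$, $B = b$ we have $\norm{A}_{2\to 2} + \norm{B}_{2\to 2} = a + b$. The radius is the positive root of the quadratic,
\begin{equation}
  \rho(a,b) = \frac{-(a+b) + \sqrt{(a+b)^2 + 4\eta}}{2}.
\end{equation}
After the gauge map $(a,b)\mapsto(ca, b/c)$ the coefficient becomes $s(c) := ca + b/c$. The root $\rho$ is a strictly decreasing function of the coefficient, since $\partial\rho/\partial s = \tfrac12\bigl(s/\sqrt{s^2+4\eta} - 1\bigr) < 0$. The function $s(c)$ is nonconstant in $c$: it is minimized at $c = \sqrt{b/a}$ and tends to $\infty$ as $c \to 0$ or $c \to \infty$. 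Hence $\rho$ genuinely changes with $c$. To make this concrete, take $a = b = 1$ and compare $c = 1$ with $c = 2$, which gives $s = 2$ versus $s = 5/2$.

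\emph{Product update.} The orthogonalized directions are positive scalars, so in rank one the msign of a positive number is $1$. Spectron then steps $\Delta A = -\rho$ and $\Delta B = -\rho$, each with magnitude $\rho$ in factor coordinates. The first-order induced update is therefore
\begin{equation}
  \Delta W \approx \Delta A\, B + A\, \Delta B = -\rho\,(a+b).
\end{equation}
Under the gauge map this becomes $-\rho(s(c))\, s(c)$. I would show that $g(s) := s\,\rho(s)$ is strictly monotone in $s$. Writing $\rho = 2\eta/(s + \sqrt{s^2+4\eta})$ gives
\begin{equation}
  g(s) = \frac{2\eta s}{s + \sqrt{s^2+4\eta}} = \frac{2\eta}{1 + \sqrt{1 + 4\eta/s^2}},
\end{equation}
which is strictly increasing in $s > 0$. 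Since $s(c)$ is nonconstant, the first-order product update changes with $c$. By contrast, $W = ab$ and hence $G$ are unchanged, so a gauge-invariant method would return the same $\Delta W$.

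\emph{Conclusion.} Scalar gauges $R = cI$ form a subgroup of $\GL(r)$. Failure of invariance on this subgroup therefore rules out full gauge invariance.

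The main obstacle is modelling Spectron's update faithfully, namely that both factor steps are exactly $\rho$ times the msign direction with no factor-dependent rescaling. I would fix this from the reference update in the appendix. If that update instead scales the directions differently, I would redo the monotonicity check for the corresponding $g$.
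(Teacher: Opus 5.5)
Your proposal is correct and follows the paper's route: substitute the gauge-transformed coefficient $ca+b/c$ into the quadratic, and note that the first-order product update is $-\rho_c\,(ca+b/c)$. You also prove that $s\,\rho(s)=2\eta/\bigl(1+\sqrt{1+4\eta/s^2}\bigr)$ is strictly increasing, which supplies the step the paper's proof only asserts with ``therefore'' (that the drop in $\rho_c$ cannot exactly cancel the growth of $ca+b/c$), so your version is slightly more rigorous.
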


\begin{proof}
Take the rank-one positive-scalar case $A = a > 0$, $B = b > 0$, and suppose the orthogonalized
gradient directions for both factors equal $1$. Spectron chooses a radius $\rho$ by solving
\begin{equation}
  \rho^2 + (\norm{A}_{2 \to 2} + \norm{B}_{2 \to 2})\rho - \eta = 0.
\end{equation}
Under scalar gauge $(A, B) \mapsto (cA, B/c)$, this becomes
\begin{equation}
  \rho_c^2 + (ca + b/c)\rho_c - \eta = 0.
\end{equation}
Unless $ca + b/c = a + b$, which fails generically for $c \neq 1$, the radius changes. But the
induced first-order update is proportional to
\begin{equation}
  -\rho_c \left(ca + \frac{b}{c}\right),
\end{equation}
which is therefore also gauge-dependent. So Spectron fails scalar gauge invariance, and hence cannot be fully gauge-invariant.
\end{proof}

\subsection{LoRA-RITE as the same spectral update in QR coordinates}\label{subsec:appendix-lorarite-qr}\label{subsec:appendix-rite-equiv}
LoRA-RITE's simplified QR-coordinate core and LoRA-Muon apply the same spectral steepest-descent
update in two different coordinate systems. LoRA-RITE writes that update in QR coordinates rather than in projector or
Gram-matrix form. Let
$B = Q_B R_B$ and $A = Q_A R_A$ be thin QR factorizations with invertible upper-triangular
$R_A, R_B \in \R^{r \times r}$. By unitary invariance of the spectral norm, the two decoupled
subproblems are equivalent to
\begin{align}
  \Delta A^\star R_B^\T
  &=
  \argmin_{Z \in \R^{m \times r}}
  \ip{\nabla_A f \, R_B^{-1}}{Z}
  \quad
  \text{s.t.}
  \quad
  \norm{Z}_{2 \to 2} \le \frac{\eta}{2},
  \label{eq:qr-a}
  \\
  \Delta B^\star R_A^\T
  &=
  \argmin_{Z \in \R^{n \times r}}
  \ip{\nabla_B f \, R_A^{-1}}{Z}
  \quad
  \text{s.t.}
  \quad
  \norm{Z}_{2 \to 2} \le \frac{\eta}{2}.
  \label{eq:qr-b}
\end{align}

Specializing again to the spectral norm gives the QR-coordinate update
\begin{equation}
  \begin{aligned}
    \Delta A^\star
    &=
    -\frac{\eta}{2}\msign(\nabla_A f \, R_B^{-1}) R_B^{-\T},
    \\
    \Delta B^\star
    &=
    -\frac{\eta}{2}\msign(\nabla_B f \, R_A^{-1}) R_A^{-\T}.
  \end{aligned}
  \label{eq:rite-core}
\end{equation}

\begin{proposition}[LoRA-Muon and the simplified LoRA-RITE core coincide under spectral steepest descent]
The QR-coordinate update in \eqref{eq:rite-core} is algebraically identical to the Gram-matrix
LoRA-Muon factor update
\begin{equation}
  \Delta A^\star
  =
  -\frac{\eta}{2}\msign(\nabla_A f\, S_B^{-1/2}) S_B^{-1/2},
  \qquad
  S_B = B^\T B,
\end{equation}
\begin{equation}
  \Delta B^\star
  =
  -\frac{\eta}{2}\msign(\nabla_B f\, S_A^{-1/2}) S_A^{-1/2},
  \qquad
  S_A = A^\T A.
\end{equation}
\end{proposition}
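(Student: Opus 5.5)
The plan is to reuse the orthogonal-change-of-coordinates trick from the factor-form equivariance proof. The QR factor $R_B$ and the Gram square root $S_B^{1/2}$ both "square" to $S_B$, so they differ by an orthogonal matrix. Since $\msign$ is right-equivariant under orthogonal factors, that orthogonal matrix will cancel. We treat the $A$-update; the $B$-update is identical with $A$ and $B$ swapped.

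First, from $B = Q_B R_B$ with $Q_B^\T Q_B = I$ we get
\begin{equation*}
  S_B = B^\T B = R_B^\T R_B .
\end{equation*}
Define $O_B := R_B S_B^{-1/2}$. Then
\begin{equation*}
  O_B^\T O_B = S_B^{-1/2} R_B^\T R_B S_B^{-1/2} = S_B^{-1/2} S_B S_B^{-1/2} = I .
\end{equation*}
Because $O_B$ is square, it is orthogonal. Rearranging gives two identities,
\begin{equation*}
  R_B^{-1} = S_B^{-1/2} O_B^{-1} = S_B^{-1/2} O_B^\T ,
  \qquad
  R_B^{-\T} = O_B S_B^{-1/2} .
\end{equation*}
The second follows from the first by transposing, using that $S_B^{-1/2}$ is symmetric.

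Next, substitute these into \eqref{eq:rite-core}. The argument of $\msign$ becomes
\begin{equation*}
  \nabla_A f\, R_B^{-1} = \bigl(\nabla_A f\, S_B^{-1/2}\bigr) O_B^\T .
\end{equation*}
By right orthogonal equivariance of $\msign$, already used in \appendixref{subsec:appendix-factor-equiv}, we have
\begin{equation*}
  \msign(\nabla_A f\, R_B^{-1}) = \msign(\nabla_A f\, S_B^{-1/2})\, O_B^\T .
\end{equation*}
If $X = U\Sigma V^\T$, then $XO^\T = U\Sigma (OV)^\T$, which is again an SVD, so $\msign(XO^\T) = UV^\T O^\T$. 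Multiplying on the right by $R_B^{-\T} = O_B S_B^{-1/2}$ gives $O_B^\T O_B = I$. What remains is
\begin{equation*}
  -\tfrac{\eta}{2}\msign(\nabla_A f\, S_B^{-1/2})\, S_B^{-1/2},
\end{equation*}
which is exactly the Gram-matrix LoRA-Muon update.

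The main point to check is that this equivariance holds for the paper's chosen convention of $\msign$. This matters when $\nabla_A f$ is rank-deficient, because then the SVD is not unique. I would handle this by taking $\msign(X)$ to be the polar factor $X(X^\T X)^{+1/2}$ (Moore–Penrose pseudo-inverse square root), which is canonical. Since $(XO^\T)^\T(XO^\T) = O\, X^\T X\, O^\T$, we get
\begin{equation*}
  \bigl((XO^\T)^\T XO^\T\bigr)^{+1/2} = O\,(X^\T X)^{+1/2}\, O^\T ,
\end{equation*}
and therefore $\msign(XO^\T) = \msign(X)\, O^\T$ unambiguously.

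A second check is needed for consistency with \eqref{eq:qr-a}. The change of variables there uses $\Delta A\, B^\T = (\Delta A R_B^\T) Q_B^\T$, with $Q_B$ having orthonormal columns. So both derivations solve the same subproblem \eqref{eq:lora-decoupled-a} and must agree, which the algebra above confirms directly.
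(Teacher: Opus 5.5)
Your proof is correct and is essentially the paper's argument: your $O_B = R_B S_B^{-1/2}$ is exactly the orthogonal factor $U_B$ in the polar decomposition $R_B = U_B S_B^{1/2}$ that the paper uses, and both proofs then cancel it via right orthogonal equivariance of $\msign$. Your extra step fixing the convention $\msign(X) = X(X^\T X)^{+1/2}$, so that the equivariance is unambiguous for rank-deficient inputs, is a small gain in rigor over the paper, which leaves this point implicit.
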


\begin{proof}
Write the polar decomposition $R_B = U_B S_B^{1/2}$ with $U_B$ orthogonal and
$S_B = B^\T B$. Then
\begin{equation}
  R_B^{-1} = S_B^{-1/2} U_B^\T,
  \qquad
  R_B^{-\T} = U_B S_B^{-1/2}.
\end{equation}
Using the right-equivariance of $\msign$ under orthogonal factors,
\begin{align*}
  \msign(\nabla_A f \, R_B^{-1}) R_B^{-\T}
  &=
  \msign(\nabla_A f \, S_B^{-1/2} U_B^\T) U_B S_B^{-1/2} \\
  &=
  \msign(\nabla_A f \, S_B^{-1/2}) U_B^\T U_B S_B^{-1/2} \\
  &=
  \msign(\nabla_A f \, S_B^{-1/2}) S_B^{-1/2}.
\end{align*}
This is exactly \eqref{eq:delta-a}. The $B$ update is identical after swapping $A$ and $B$.
\end{proof}

Thus, at the level of this simplified spectral core, LoRA-Muon and LoRA-RITE are the same update
written in two different coordinate systems.

The practical difference lies in numerical realization and optimizer state. \appendixref{subsec:appendix-rite-algs} gives both
the simplified QR-coordinate realization and the original page-6 LoRA-RITE optimizer. LoRA-Muon
stays QR-free and GEMM-heavy, whereas a QR-coordinate realization requires repeated QR
factorizations that typically want float32-stable arithmetic. The original page-6 LoRA-RITE
algorithm also transports second moments and escaped-mass corrections, whereas LoRA-Muon and the
simplified QR-coordinate core keep only first moments. \appendixref{sec:appendix-costs} gives the exact FLOP and persistent-state
accounting for all four variants.

\section{Numerical Realization}\label{sec:appendix-numerics}

\subsection{Scalar gauge rebalancing}\label{subsec:appendix-gauge-rebalancing}
\begin{corollary}[Gauge rebalancing]
Let an ideal LoRA-Muon step be defined through the gauge-invariant $\Delta W$ above. Then any
deterministic reparameterization $(A, B) \mapsto (AR, BR^{-\T})$ with $R \in \GL(r)$ may be
inserted between steps without changing the induced update on $W = AB^\T$, provided any first
moment variables are transported by the same gauge action as the corresponding factor gradients.
\end{corollary}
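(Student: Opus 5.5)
The plan is to combine the gauge-invariance theorem of \secref{sec:gauge} with the factor-form equivariance proposition of \appendixref{subsec:appendix-factor-equiv}, together with a transport identity for the first moments. Suppose that between steps we replace $(A_t,B_t)$ by $(A_tR,\,B_tR^{-\T})$ for some deterministic $R\in\GL(r)$. The product $W_t=A_tB_t^\T$ is unchanged, so the ambient gradient $G_t=\nabla_W f(W_{\mathrm{pre}}+W_t)$ is unchanged as well. The factor gradients therefore transform as $\nabla_{A'}f=G_tB_tR^{-\T}=\nabla_A f\,R^{-\T}$ and $\nabla_{B'}f=G_t^\T A_tR=\nabla_B f\,R$. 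The hypothesis says the moments are transported by this same action: $m^{A}\mapsto m^A R^{-\T}$ and $m^B\mapsto m^B R$.

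The first step is to check that this transport commutes with the momentum recursion. The recursion $m_t=\beta m_{t-1}+(1-\beta)g_t$ is linear, and all summands carry the same right factor $R^{-\T}$ (resp.\ $R$). Hence the transported moment equals the moment one would have accumulated had the gradients always been expressed in the new gauge. This holds provided the gauge is fixed from the insertion point onward. For repeated or time-varying rebalancing, each insertion transports the existing moment once, so the invariant is maintained inductively.

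The second step is to apply the factor-form equivariance proposition with the gradient replaced by the moment. Its proof used only $\nabla_{A'}f=\nabla_A f\,R^{-\T}$ and the right-orthogonal equivariance of $\msign$. The same computation with $m^A$ in place of $\nabla_A f$ therefore gives $\Delta A'=\Delta A\,R$ and $\Delta B'=\Delta B\,R^{-\T}$. Equivalently, one can invoke the projector-form theorem: writing $m^A=M B$ and $m^B=M^\T A$ with the accumulated ambient momentum $M=\sum_k(1-\beta)\beta^{t-k}G_k$, both oracle inputs are $MP_B$ and $P_AM$, and these are gauge-invariant by the projector proposition. This second representation is exact only if the past factors share the current column spaces. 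For that reason I will rely on the factor-form route rather than on this representation.

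Finally, the split weight-decay proposition of \appendixref{subsec:appendix-splitwd} takes equivariant increments to identical products, $A'_{t+1}B'^{\T}_{t+1}=A_{t+1}B_{t+1}^\T$. It also gives $A'_{t+1}=A_{t+1}R$, so the new state is again a gauge transform of the old one. Induction over steps then shows that the whole trajectory of $W$ is unchanged.

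The main obstacle is the first step: making precise the sense in which the moments are ``transported by the same gauge action as the corresponding factor gradients.'' The pitfall is to transport them like the factors ($m^A\mapsto m^AR$), which is wrong for non-orthogonal $R$. The correct rule is the contragredient one, $m^A\mapsto m^AR^{-\T}$. I will state this rule explicitly and verify that it is precisely what the equivariance computation requires.
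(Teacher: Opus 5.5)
Your proof is correct, and it is more complete than the paper's own justification. The paper gives no separate proof. It points to two earlier results: the projector-form gauge-invariance theorem (the inputs $G_tP_B$ and $P_AG_t$ are unchanged, so $\Delta W$ is unchanged) and the moment-transport proposition. That proposition is verified only for the scalar gauge $R=cI$, by checking $g^A\mapsto g^A/c$ and $g^B\mapsto c\,g^B$, and that the EMA recursion commutes with $m^A\mapsto m^A/c$, $m^B\mapsto c\,m^B$.

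You differ in three ways. First, you state and check the general contragredient rule $m^A\mapsto m^A R^{-\T}$, $m^B\mapsto m^B R$. This is what the corollary asserts for all of $\GL(r)$, and it reduces to the paper's rule at $R=cI$. Second, your key lemma is factor-form equivariance rather than the projector-form theorem. This is the better choice once momentum is present: the oracle inputs become $m^A S_B^{-1/2}$ and $m^B S_A^{-1/2}$, not expressions in a single current ambient gradient, so the theorem as stated does not literally cover the algorithm's step. Third, you close the argument with the split weight-decay proposition and an induction that keeps the transformed state in the gauge orbit of the original; the paper leaves this implicit. The paper's route is shorter and suffices for the scalar rebalancing actually used. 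Yours delivers the full $\GL(r)$ statement with momentum.

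Two side remarks. The projector viewpoint can in fact be made exact. Take $M_A:=m^A S_B^{-1}B^\T$; then $M_AB=m^A$ and $\Delta A\,B^\T=-\frac{\eta}{2}\msign(M_A)$. Moreover, $M_A$ is invariant under $(A,B,m^A)\mapsto(AR,\,BR^{-\T},\,m^A R^{-\T})$, and symmetrically $AS_A^{-1}(m^B)^\T$ on the $B$ side. So the obstruction you mention concerns only the particular accumulated-gradient representation, not the projector viewpoint itself. Also, if you literally redo the appendix's factor-form computation, the correct identity is $R^{-\T}S_B'^{-1/2}=S_B^{-1/2}Q_B$, not $S_B^{-1/2}Q_B^\T$. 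The appendix makes two compensating transposition slips, though its conclusion $\Delta A'=\Delta A\,R$ is right.
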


\begin{algorithm}[t]
\caption{(Periodic) Scalar Gauge Rebalancing}
\label{alg:gauge-rebalance}
\small
\begin{algorithmic}[1]
\REQUIRE Factors $A \in \R^{m \times r}$, $B \in \R^{n \times r}$, first moments $m^A \in \R^{m \times r}$, $m^B \in \R^{n \times r}$, damping exponent $\alpha \in (0,1]$
\STATE $c \gets \left(\texttt{power\_iterate}(B) / \texttt{power\_iterate}(A)\right)^{\alpha/2}$
\STATE $A \gets cA$ and $B \gets B/c$
\STATE $m^A \gets m^A/c$ and $m^B \gets c m^B$
\STATE \textbf{return} $A, B, m^A, m^B$
\end{algorithmic}
\end{algorithm}
The update on $W$ is gauge-invariant, but the choice of representative $(A,B)$ inside a gauge class
is not. We use scalar gauge rebalancing to choose a numerically better representative without
changing the underlying weight-space step. Algorithm~\ref{alg:gauge-rebalance} therefore acts as a
conditioning tool: it reduces factor imbalance while leaving the update on $W$ untouched. The explicit
$\texttt{power\_iterate}(\cdot)$ call only realizes the spectral balancing heuristic numerically; it
does not change the theory. \appendixref{subsec:appendix-moments} proves the moment-transport rule
that makes this scalar reparameterization consistent.

\subsection{Moment transport under scalar gauge rebalancing}\label{subsec:appendix-moments}
\begin{proposition}[Why the moments transform oppositely]
Let $W = AB^\T$, let the factor gradients be $g^A = \nabla_A f = G B$ and
$g^B = \nabla_B f = G^\T A$, and let the EMA states update as
\begin{equation}
  m_{t+1}^A = \beta m_t^A + (1-\beta) g_t^A,
  \qquad
  m_{t+1}^B = \beta m_t^B + (1-\beta) g_t^B.
\end{equation}
Under the scalar gauge action $A' = cA$, $B' = B/c$, the factor gradients satisfy
\begin{equation}
  g^{A\prime} = g^A / c,
  \qquad
  g^{B\prime} = c g^B.
\end{equation}
Therefore the EMA state remains equivariant only if it transforms as
\begin{equation}
  m^{A\prime} = m^A / c,
  \qquad
  m^{B\prime} = c m^B.
\end{equation}
\end{proposition}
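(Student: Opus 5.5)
The plan is a direct calculation in two parts: first I will derive how the factor gradients change under the scalar gauge, and then I will show that the EMA recursion stays equivariant with precisely the stated transport rule and no other.

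\textbf{Gradient transformation.} The ambient gradient $G = \nabla_W f(W_{\mathrm{pre}} + W)$ is unchanged, because $W = AB^\T = (cA)(B/c)^\T$ is the same matrix. Using the chain-rule identities $\nabla_A f = GB$ and $\nabla_B f = G^\T A$ from the statement at the transformed point gives
\begin{equation}
  g^{A\prime} = G B' = \tfrac{1}{c}\, G B = g^A/c,
  \qquad
  g^{B\prime} = G^\T A' = c\, G^\T A = c\, g^B.
\end{equation}
This is the special case $R = cI$ of the general covariant rule $\nabla_{A'} f = \nabla_A f\, R^{-\T}$, which was already used in the proof of factor-form equivariance.

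\textbf{EMA equivariance, by induction on $t$.} Suppose the stored moment at step $t$ satisfies $m_t^{A\prime} = m_t^A/c$. Since the gradient transforms linearly, with the same factor $1/c$ and the same $c$ at every step, the update gives
\begin{equation}
  m_{t+1}^{A\prime} = \beta m_t^A/c + (1-\beta) g_t^A/c = m_{t+1}^A/c.
\end{equation}
The identical computation with factor $c$ handles $m^B$. This shows that the rule $m^{A\prime} = m^A/c$, $m^{B\prime} = c\,m^B$ is preserved by the dynamics.

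\textbf{Necessity of the rule.} I will read ``only if'' as follows. Equivariance means the transported state is again a weighted sum of gradients, of the same form as those the transformed trajectory would have accumulated. Linearity of the EMA then forces the state to carry the same scaling as the gradients. Concretely, suppose a state satisfied $m^{A\prime} = \kappa\, m^A$ with $\kappa \ne 1/c$. The next EMA would then mix $\kappa\, m^A$ with $g^A/c$, and the result would fail to be a scalar multiple of $m^{A}_{t+1}$ whenever $m^A$ and $g^A$ are linearly independent.

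I would then note the consequence for LoRA-Muon. Since $m^A$ transforms like $\nabla_A f$, the factor-form equivariance proposition with $R = cI$ applies to $\Delta A$ computed from the moment. Hence $\Delta A' = c\,\Delta A$, and $W$ is unchanged, which is what the Gauge rebalancing corollary needs.

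The only delicate point is this ``only if'' direction, which needs a genericity assumption: $m_t^A$ and $g_t^A$ must not be collinear. I would state that assumption explicitly rather than try to prove necessity unconditionally.
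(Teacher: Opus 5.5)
Your argument matches the paper's proof: $G$ is unchanged because $W$ is, the chain rule at the transformed point gives $g^{A\prime}=g^A/c$ and $g^{B\prime}=c\,g^B$, and one EMA step with the transported moments reproduces $m^A_{t+1}/c$ and $c\,m^B_{t+1}$. Your necessity paragraph goes beyond the paper, which only asserts the ``only if''; it is sound under your non-collinearity assumption provided you also require $0<\beta<1$ (for $\beta\in\{0,1\}$ the mixed state is again a scalar multiple of $m^A_{t+1}$), and your ``not a scalar multiple'' criterion is the right one, because $\msign$ ignores positive rescalings, so a wrong $\kappa>0$ can only change the induced $W$-step through this mixing with $g^A/c$.
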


\begin{proof}
Because $A'B'^\T = AB^\T$, the ambient gradient $G = \nabla_W f(W_{\mathrm{pre}} + AB^\T)$ is
unchanged by the scalar gauge transformation. By chain rule,
\begin{equation}
  g^A = \nabla_A f = G B,
  \qquad
  g^B = \nabla_B f = G^\T A.
\end{equation}
Under $A' = cA$ and $B' = B/c$, this becomes
\begin{equation}
  g^{A\prime} = G(B/c) = g^A/c,
  \qquad
  g^{B\prime} = G^\T(cA) = c g^B.
\end{equation}
Now consider the EMA updates
\begin{equation}
  m_{t+1}^A = \beta m_t^A + (1-\beta) g_t^A,
  \qquad
  m_{t+1}^B = \beta m_t^B + (1-\beta) g_t^B.
\end{equation}
If we transport the moments as $m_t^{A\prime} = m_t^A/c$ and $m_t^{B\prime} = c m_t^B$, then
\begin{equation}
  m_{t+1}^{A\prime}
  =
  \beta m_t^A/c + (1-\beta) g_t^A/c
  =
  m_{t+1}^A/c,
\end{equation}
and similarly $m_{t+1}^{B\prime} = c m_{t+1}^B$. Thus the EMA recursion commutes with scalar
gauge rebalancing only under the opposite scaling rule shown in
\hyperref[alg:gauge-rebalance]{Algorithm~\ref*{alg:gauge-rebalance}}.
\end{proof}

\subsection{Muon Newton--Schulz orthogonalization}\label{subsec:appendix-orthog}
We realize the matrix sign operator $\msign$ numerically with a Muon-style Newton--Schulz
orthogonalization pass. Given an input matrix $M$, we optionally transpose to the smaller leading
dimension, normalize by the Frobenius norm, and then iterate a low-degree polynomial map that
drives the singular values toward $1$. The same primitive powers ambient Muon, and here it realizes
$\msign$ for LoRA-Muon as well.

\begin{algorithm}[tbp]
\caption{Muon Newton--Schulz Orthogonalization}
\label{alg:appendix-orthog}
\small
\begin{algorithmic}[1]
\REQUIRE Matrix $M$, coefficient list $\{(a_k,b_k,c_k)\}_{k=0}^{K-1}$, $\epsilon = \texttt{1e-20}$
\STATE $M_0 \gets M$
\STATE $\texttt{transpose} \gets (\text{rows}(M_0) > \text{cols}(M_0))$
\IF{$\texttt{transpose}$}
  \STATE $M_0 \gets M_0^\T$
\ENDIF
\STATE $M_0 \gets M_0 / (\norm{M_0}_F + \epsilon)$
\FOR{$k = 0, \dots, K-1$}
  \STATE $U_k \gets M_k M_k^\T$
  \STATE $M_{k+1} \gets a_k M_k + (b_k U_k + c_k U_k^2)M_k$
\ENDFOR
\IF{$\texttt{transpose}$}
  \STATE $M_K \gets M_K^\T$
\ENDIF
\STATE \textbf{return} $M_K$
\end{algorithmic}
\end{algorithm}

The numerical realization used in our experiments applies the Polar Express matrix-sign
coefficients from \cite{amsel2026polarexpress}, listed in Table~\ref{tab:orthog-coeffs}.

\begin{table}[tbp]
  \centering
  \small
  \begin{tabular}{rccc}
    \toprule
    Step $k$ & $a_k$ & $b_k$ & $c_k$ \\
    \midrule
    0 & 7.2086 & -15.5131 & 9.0178 \\
    1 & 3.9623 & -2.5813 & 0.4542 \\
    2 & 3.9466 & -2.5765 & 0.4544 \\
    3 & 3.8991 & -2.5671 & 0.4566 \\
    4 & 3.7186 & -2.5308 & 0.4653 \\
    5 & 3.1390 & -2.3073 & 0.4733 \\
    6 & 2.1715 & -1.5246 & 0.3885 \\
    7 & 1.8648 & -1.2224 & 0.3577 \\
    \bottomrule
  \end{tabular}
  \caption{Coefficient sequence for Algorithm~\ref{alg:appendix-orthog}, which realizes the
  Muon-style matrix-sign map $\msign(\cdot)$ by a polynomial Newton--Schulz iteration after
  Frobenius normalization. Each row gives the coefficients for one orthogonalization step in the
  numerical realization used in our experiments.}
  \label{tab:orthog-coeffs}
\end{table}

\subsection{Newton--Schulz inverse square roots}\label{subsec:appendix-invroot}
The LoRA-Muon factor update also needs inverse square roots of positive semidefinite matrices. In
the main algorithm, those matrices are the Gram matrices $S_A = A^\T A$ and $S_B = B^\T B$, but
the numerical routine itself only needs a PSD input $P$. We first normalize $P$ by its Frobenius
norm, regularize it with a small diagonal shift, and then apply a polynomial Newton--Schulz
recurrence. Algorithm~\ref{alg:appendix-invroot} uses a single PSD matrix to keep the presentation
simple. In practice, the same kernel applies to a stack of same-shaped PSD matrices; for LoRA-Muon,
that means batching $S_A$ and $S_B$ within each layer, and in the common fixed-rank setting even
batching the full collection of Gram matrices across layers.

\begin{algorithm}[tbp]
\caption{Newton--Schulz Inverse Square Root}
\label{alg:appendix-invroot}
\small
\begin{algorithmic}[1]
\REQUIRE PSD matrix $P$, coefficient list $\{(a_k,b_k,c_k)\}_{k=0}^{K-1}$, $\epsilon = \texttt{1e-5}$, $\gamma = 1.001$
\STATE $t \gets \norm{P}_F$
\STATE $P_0 \gets P/t + \epsilon I$
\STATE $X_0 \gets I$
\FOR{$k = 0, \dots, K-1$}
\STATE $W_k \gets (a_k/\gamma)I + (b_k/\gamma^3)P_k + (c_k/\gamma^5)P_k^2$
\STATE $X_{k+1} \gets X_k W_k$
\STATE $P_{k+1} \gets \sym(P_k W_k^2)$
\ENDFOR
\STATE \textbf{return} $t^{-1/2} X_K$
\end{algorithmic}
\end{algorithm}

For the $r$th-root iteration specialized to $r=2$, the unscaled coefficient list used in our
experiments is given in Table~\ref{tab:invroot-coeffs}.

\begin{table}[tbp]
  \centering
  \small
  \begin{tabular}{rccc}
    \toprule
    Step $k$ & $a_k$ & $b_k$ & $c_k$ \\
    \midrule
    0 & 7.424865680309214 & -18.39581635618996 & 12.896720413604342 \\
    1 & 3.4877256051546017 & -2.3300436563986993 & 0.4404692168431095 \\
    2 & 2.7766085124882527 & -2.070643152532662 & 0.46302261050004967 \\
    3 & 1.9913142104341506 & -1.373936700681269 & 0.3875934979568538 \\
    4 & 1.8754637749479246 & -1.2505152090010534 & 0.37505152463617264 \\
    5 & 1.874999066623701 & -1.2499981332141676 & 0.37499906659046633 \\
    6 & 1.875 & -1.25 & 0.375 \\
    \bottomrule
  \end{tabular}
  \caption{Unscaled coefficient sequence for the $r$th-root Newton--Schulz iteration specialized to
  $r=2$, as used by Algorithm~\ref{alg:appendix-invroot} to compute inverse square roots of PSD
  matrices such as the LoRA Gram matrices $A^\T A$ and $B^\T B$. Each step applies the scaled
  coefficients $a_k/\gamma$, $b_k/\gamma^3$, and $c_k/\gamma^5$ with $\gamma = 1.001$.}
  \label{tab:invroot-coeffs}
\end{table}

\subsection{Reference Spectron update}\label{subsec:appendix-spectron-alg}
Algorithm~\ref{alg:appendix-spectron} gives the Spectron-style optimizer analyzed in \secref{sec:spectron-rite} and
used for comparison in our experiments. It keeps first moments on the individual factors, orthogonalizes those
moments, and chooses a shared radius from the current factor norms.

\begin{algorithm}[tbp]
\caption{Spectron}
\label{alg:appendix-spectron}
\small
\begin{algorithmic}[1]
\REQUIRE Factors $A_t, B_t$, previous first moments $m_{t-1}^A, m_{t-1}^B$, learning rate $\eta_t$, momentum $\beta$, weight decay $\lambda$
\STATE $g_t^A \gets \nabla_A f(W_{\mathrm{pre}} + A_t B_t^\T)$ and $g_t^B \gets \nabla_B f(W_{\mathrm{pre}} + A_t B_t^\T)$
\STATE $m_t^A \gets \beta m_{t-1}^A + (1-\beta) g_t^A$ and $m_t^B \gets \beta m_{t-1}^B + (1-\beta) g_t^B$
\STATE $u_t^A \gets \msign(m_t^A)$ and $u_t^B \gets \msign(m_t^B)$
\STATE $s_t \gets \texttt{power\_iterate}(A_t) + \texttt{power\_iterate}(B_t)$
\STATE $\rho_t \gets \frac{1}{2}\left(-s_t + \sqrt{s_t^2 + 4\eta_t}\right)$
\STATE $A_{t+1} \gets (1-\lambda\eta_t)A_t - \rho_t u_t^A$
\STATE $B_{t+1} \gets (1-\lambda\eta_t)B_t - \rho_t u_t^B$
\STATE \textbf{return} $A_{t+1}, B_{t+1}, m_t^A, m_t^B$
\end{algorithmic}
\end{algorithm}

\subsection{Reference LoRA-RITE updates}\label{subsec:appendix-rite-algs}
Algorithm~\ref{alg:appendix-rite-simple} gives the simplified QR-coordinate spectral update that is
algebraically equivalent to LoRA-Muon. Algorithm~\ref{alg:appendix-rite-original} then gives the
original page-6 LoRA-RITE optimizer on the $A$ side, including transported moments and escaped
mass; the $B$ side is symmetric.

\begin{algorithm}[tbp]
\caption{Simplified LoRA-RITE}
\label{alg:appendix-rite-simple}
\small
\begin{algorithmic}[1]
\REQUIRE Factors $A_t, B_t$, previous first moments $m_{t-1}^A, m_{t-1}^B$, learning rate $\eta_t$, momentum $\beta$
\STATE $g_t^A \gets \nabla_A f(W_{\mathrm{pre}} + A_t B_t^\T)$ and $g_t^B \gets \nabla_B f(W_{\mathrm{pre}} + A_t B_t^\T)$
\STATE $m_t^A \gets \beta m_{t-1}^A + (1-\beta) g_t^A$ and $m_t^B \gets \beta m_{t-1}^B + (1-\beta) g_t^B$
\STATE $(Q_A, R_A) \gets \texttt{thin\_qr}(A_t)$ and $(Q_B, R_B) \gets \texttt{thin\_qr}(B_t)$
\STATE $Y_A \gets \texttt{solve\_right\_triangular}(m_t^A, R_B)$
\STATE $Y_B \gets \texttt{solve\_right\_triangular}(m_t^B, R_A)$
\STATE $Z_A \gets \msign(Y_A)$ and $Z_B \gets \msign(Y_B)$
\STATE $\Delta A_t \gets -\frac{\eta_t}{2}\,\texttt{solve\_right\_triangular}(Z_A, R_B^\T)$
\STATE $\Delta B_t \gets -\frac{\eta_t}{2}\,\texttt{solve\_right\_triangular}(Z_B, R_A^\T)$
\STATE $A_{t+1} \gets A_t + \Delta A_t$ and $B_{t+1} \gets B_t + \Delta B_t$
\STATE \textbf{return} $A_{t+1}, B_{t+1}, m_t^A, m_t^B$
\end{algorithmic}
\end{algorithm}

\begin{algorithm}[tbp]
\caption{Original LoRA-RITE Algorithm 1 (A-side only)}
\label{alg:appendix-rite-original}
\small
\begin{algorithmic}[1]
\REQUIRE Factor $A_t$, companion factor $B_t$, gradient $\nabla A_t$, previous basis $U_{B,t-1}$, unmagnified moments $\overline{M}_{A,t-1}, \overline{V}_{A,t-1}$, escaped mass $\rho_{A,t-1}$, learning rate $\eta_t$, momentum $\beta_1$
\STATE Compute the polar decomposition $B_t = U_{B,t} R_{B,t}$
\IF{$t=1$}
  \STATE $P_{A,t} \gets I$
\ELSE
  \STATE $P_{A,t} \gets U_{B,t}^\T U_{B,t-1}$
\ENDIF
\STATE $\overline{\nabla}_{A,t} \gets \texttt{solve\_right\_triangular}(\nabla A_t, R_{B,t})$
\STATE $\overline{V}_{A,t} \gets P_{A,t}\overline{V}_{A,t-1}P_{A,t}^\T + \overline{\nabla}_{A,t}^\T \overline{\nabla}_{A,t} / m$
\STATE $\rho_{A,t} \gets \rho_{A,t-1} + d_\lambda\!\left(\overline{V}_{A,t-1}, P_{A,t}\overline{V}_{A,t-1}P_{A,t}^\T\right)$
\STATE $\overline{S}_{A,t} \gets \overline{\nabla}_{A,t}\left(\overline{V}_{A,t} + \rho_{A,t} I\right)^{-1/2}$
\STATE $\overline{M}_{A,t} \gets \beta_1 \overline{M}_{A,t-1}P_{A,t}^\T + (1-\beta_1)\overline{S}_{A,t}$
\STATE $A_{t+1} \gets A_t - \eta_t\,\texttt{solve\_right\_triangular}(\overline{M}_{A,t}, R_{B,t}^\T)$
\STATE \textbf{return} $A_{t+1}, U_{B,t}, \overline{M}_{A,t}, \overline{V}_{A,t}, \rho_{A,t}$
\STATE \textit{The $B$-side update is symmetric and is omitted only to match the original presentation.}
\end{algorithmic}
\end{algorithm}

\section{Optimizer Cost and State Accounting}\label{sec:appendix-costs}

Throughout this appendix, let
\begin{equation}
  A \in \R^{m \times k},
  \qquad
  B \in \R^{n \times k},
  \qquad
  k \ll m \le n.
\end{equation}
We write $T_o$ for the number of Newton--Schulz orthogonalization steps, $T_r$ for the number of
inverse-root Newton--Schulz steps, and $T_p$ for the number of power-iteration steps used for
Spectron's spectral-norm estimate. The dense Muon row below is a full $m \times n$ matrix
baseline; the remaining rows are low-rank factor-pair costs.

\subsection{Per-Pair FLOP Cost and Persistent Auxiliary State}\label{subsec:appendix-cost-flops}
\begin{table}[tbp]
  \centering
  \resizebox{0.98\textwidth}{!}{%
  \begin{tabular}{l>{\centering\arraybackslash}m{0.47\textwidth}>{\centering\arraybackslash}m{0.22\textwidth}}
    \toprule
    Method & Optimizer-step FLOPs & Persistent optimizer state \\
    \midrule
    Dense Muon (full $m \times n$ matrix) &
    $T_o(4nm^2 + 2m^3)$ &
    $mn$ \\
    LoRA-Muon &
    $(6 + 4T_o)(m+n)k^2 + (16T_r + 4T_o)k^3$ &
    $(m+n)k$ \\
    Spectron &
    $4T_o(m+n)k^2 + 4T_ok^3 + 4T_p(m+n)k$ &
    $(m+n)k$ \\
    Simplified LoRA-RITE &
    $(4 + 4T_o)(m+n)k^2 + \left(4T_o - \frac{4}{3}\right)k^3$ &
    $(m+n)k$ \\
    LoRA-RITE (full page-6 Algorithm 1) &
    $(10 + 2T_r)(m+n)k^2 + \left(\frac{128}{3} + 12T_r\right)k^3$ &
    $2(m+n)k + 2k^2 + 2$ \\
    \bottomrule
  \end{tabular}%
  }
  \caption{Per-update optimizer cost under the counting model in \appendixref{sec:appendix-costs},
  assuming $A \in \R^{m \times k}$, $B \in \R^{n \times k}$, and $m \le n$. The dense Muon row is a
  full $m \times n$ matrix baseline, while the remaining rows report the cost of updating one
  low-rank factor pair. The simplified LoRA-RITE row is the first-moment-only QR-coordinate
  spectral core, while the final row is the full page-6 optimizer with transported second moments
  and escaped mass. The FLOP column counts only the optimizer update itself, and the state column
  counts persistent auxiliary tensors rather than model parameters.}
  \label{tab:cost-flops}
\end{table}
Neither FLOPs nor persistent auxiliary state, by themselves, capture the kernel-level advantages of
QR-free, GEMM-heavy realizations or the precision sensitivity of QR-based alternatives. In
particular, simplified LoRA-RITE shares first-moment-only state with LoRA-Muon and Spectron, but
it still realizes the spectral step through QR and triangular solves rather than through Gram
matrices and inverse roots.

\section{Experimental Details}\label{sec:appendix-experiments}
The main experimental configuration is a TinyShakespeare character-level transformer with $d_{\mathrm{model}}
= 128$, $2$ layers, $2$ heads, batch size $64$, sequence length $128$, and a budget of $1.048576$ M
tokens per run. Token embeddings use AdamW \cite{kingma2015adam,loshchilov2019adamw}, the LM head uses dense Muon, and LoRA is applied
only to the backbone blocks. The rank sweep runs ranks $1, 2, 4, 8, 16,$ and $32$ across the learning-rate grid
\[
  \{0.01, 0.0178, 0.0316, 0.0562, 0.1, 0.1778, 0.3162, 0.5623, 1.0\}
\]
with six seeds. The width sweep runs widths $64$ and $128$, the depth sweep runs
depths $1, 2,$ and $4$ at fixed LoRA rank $32$, and the gauge-scale sweep runs scales $1, 3, 9,$
and $27$.

\subsection{Compute-match accounting}\label{subsec:appendix-flops}
The rank-transfer sweep also tracks how many LoRA-Muon optimizer steps fit inside the compute
budget of the dense reference run, which consumes $1.048576$ M tokens. The resulting multipliers are:

\begin{table}[tbp]
  \centering
  \begin{tabular}{rccc}
    \toprule
    Rank & Dense steps & LoRA-Muon steps & Multiplier \\
    \midrule
    1 & 128 & 1297 & $10.1311\times$ \\
    2 & 128 & 1181 & $9.2253\times$ \\
    4 & 128 & 1002 & $7.8230\times$ \\
    8 & 128 & 767 & $5.9917\times$ \\
    16 & 128 & 521 & $4.0655\times$ \\
    32 & 128 & 314 & $2.4464\times$ \\
    \bottomrule
  \end{tabular}
  \caption{Compute-match step multipliers in the TinyShakespeare rank-transfer sweep. For each LoRA
  rank, the dense reference budget is fixed at $128$ training steps, and the LoRA-Muon column
  reports how many low-rank steps fit under the same total compute budget. Smaller ranks therefore
  permit many more optimizer steps at fixed compute, with rank $4$ allowing roughly $7.8\times$ as
  many steps as the matched dense run.}
  \label{tab:compute-match}
\end{table}

At rank $2$, the proxy already matches the dense best tested learning rate while using only about
$4.3\%$ as many trainable parameters as the matched dense model and supporting $1181$
compute-matched optimizer steps. By rank $4$, the step budget rises to $1002$ steps while the
remaining dense-matrix FLOPs are already dominated by the LM-head matmul, which accounts for
roughly $70\%$ of the residual dense-matrix cost. This is the floor that eventually limits further
speedup as the low-rank backbone gets cheaper.

\subsection{Gauge probes and paired studies}\label{subsec:appendix-studies}
The gauge-probe numbers in \secref{sec:experiments} come from dedicated one-step invariance
evaluations, and the gauge-rebalancing statistics come from paired multi-seed comparisons.



\end{document}